\title{Infinite-channel deep convolutional Stable neural networks}
\author{%
   Daniele Bracale\\
   Department of Statistics\\
   University of Michigan\\
   \texttt{dbracale@umich.edu} \\
   \And
   Stefano Favaro \\
   Department ESOMAS\\
   University of Torino and \\
   Collegio Carlo Alberto \\
   \texttt{stefano.favaro@unito.it} \\
   \AND
   Sandra Fortini \\
   Department of Decision Sciences \\
   Bocconi University \\
   \texttt{sandra.fortini@unibocconi.it} \\
   \And
   Stefano Peluchetti \\
   Cogent Labs \\
   \texttt{speluchetti@cogent.co.jp} \\
}
\newcommand{\E}{\mathop{\mathbb{E}}}
\newcommand{\R}{\mathbb{R}}
\newcommand{\N}{\mathbb{N}}
\newcommand{\Z}{\mathbb{Z}}
\newcommand{\patch}{\scalebox{0.5}{$\bigstar$}}
\DeclareMathOperator{\1}{\mathds{1}}
\DeclareMathOperator{\ii}{\mathrm{i}}
\DeclareMathOperator{\dd}{\mathrm{d}}
\newtheorem{theorem}{Theorem}
\newtheorem{prop}{Proposition}
\newtheorem{lemma}{Lemma}
\newtheorem{definition}{Definition}
\newenvironment{sistema}%
  {\left\lbrace\begin{array}{@{}l@{}}}%
  {\end{array}\right.}
\begin{document}

\maketitle

\begin{abstract}
  The connection between infinite-width neural networks (NNs) and Gaussian processes (GPs) is well known since the seminal work of \citet{neal2012bayesian}. While numerous theoretical refinements have been proposed in recent years, the connection between NNs and GPs relies on two critical distributional assumptions on the NN's parameters: i) finite variance ii) independent and identical distribution (iid). In this paper, we consider the problem of removing assumption i) in the context of deep feed-forward convolutional NNs. We show that the infinite-channel limit of a deep feed-forward convolutional NNs, under suitable scaling, is a stochastic process with multivariate stable finite-dimensional distributions, and we give an explicit recursion over the layers for their parameters. Our contribution extends recent results of \citet{stablebehaviour} to convolutional architectures, and it paves the way to exciting lines of research that rely on GP limits.
\end{abstract}

\section{Introduction}
Fully-connected NNs are defined by an interleaved application of affine transforms and non-linear functions evaluated element-wise. Associating a distribution to the parameters of a NN allows us to consider the NN as a probabilistic model. Modern NNs typically operate in the over-parametrized regime, with millions of parameters representing a standard setting. 
As a way forward, \citet{neal2012bayesian} established the equivalence between a certain class of shallow probabilistic NNs and corresponding limiting GPs when the NN's width, hence the dimensionality of its parameters, becomes infinite. 
Recently, the NN-GP correspondences has been extended to deep fully-connected \citet{lee2017deep, matthews2018gaussianB} and convolutional \citet{novak2018bayesian,garriga2018deep} NNs. Two common properties of the NN's parameters' distribution underlay the NN-GP correspondence: i) finite variance ii) iid distribution. In this paper we extend the results of \citet{novak2018bayesian,garriga2018deep} to iid parameters distributed according to a Stable distribution (SD), effectively removing assumption i). More precisely, we study the infinite-channel limit of CNNs in the following general setting: i) the CNN is deep, namely is composed of multiple layers; ii) biases and scaled weights are iid according to a centered symmetric SD; iii) number of convolutional channels in each network’s layers goes to infinity jointly on the layers; iv) the convergence in distribution is established jointly for multiple inputs, namely the convergence concerns the class of finite dimensional distributions of the CNN viewed as a stochastic process. The use of SDs, which includes the Gaussian distribution as special case, is natural in this setting. Indeed, SDs are the most general distribution class toward which infinite sums of iid random variables can converge in law \citet{samoradnitsky2017stable}. Through this paper we define a deep CNN (DCNN) of the form $f^{(1)} = W^{(1)}* x + b^{(1)}$ and $f^{(l)} = W^{(L)} * \phi (f^{(l-1)}) + b^{(l)}$ for $l=2, \dots, L$, where $l$ indexes the $L$ layers, $\{W^{(l)}\}_{l=1}^L$ are the weights, $\{b^{(l)}\}_{l=1}^L$ are the biases and $\phi$ is an activation function. We show that the infinite-channel limit of the DCNN, under suitable scaling on the weights, is a stochastic process whose finite-dimensional distributions are multivariate SDs \citet{samoradnitsky2017stable}. This process is referred to as the convolutional stable process (CSP). Our result contributes to the theory of DCNNs, and it paves the way to extend the research directions that rely on Gaussian infinite wide limits.

The paper is structured as follows. Section \ref{sec:definitions} introduces the notation and definitions. In Section \ref{sec:DCNN} we define a DCNN jointly on $K$ distinct inputs and we specify the distribution assumptions on the model parameters. In Section \ref{sec:limits} we compute the limiting distributions jointly over $K$ inputs and we establish the distribution, again over $K$ inputs, arising from the projection of all spatial features to a single output vector (i.e. a readout layer). In Section \ref{sec:conclusions} we conclude.

\section{Notation and definitions}\label{sec:definitions}
Denote: $[n]$ the set $\{1,\dots, n\}$ $ \forall n \in \N$; fixed a size $S \in \N$, an $S$-tensor of dimension $\textbf{D}_S := D_1 \times \dots \times D_S$ (where $D_j \in \N$ $\forall j \in [S]$), is an element $A \in \R^{\textbf{D}_S}$; $[\textbf{D}_S]:= [D_1] \times \dots \times [D_S]$; $|\textbf{D}_S|:= \prod_{s=1}^{S} D_s$; $A_{d} \in \R$ (where $d =(d_1,\dots,d_S) \in [\textbf{D}_S]$) is the component of $A$ at position $d$; the norm $\|A\|^{\alpha}=\sum_{d \in [\textbf{D}_S]} A_{d}^{\alpha}$; $A_{(d_1,;)}$ is the $(S-1)$-tensor of dimension $1 \times D_2 \times \dots \times D_{S}$ consisting on the $d_1$-th position of the first dimension of $A$ and all the other positions of $A$; an integral in $\dd(A_{\{d \in [\textbf{D}_S]\}})$ is an integral in the flattened tensor $A$, i.e. with $|\textbf{D}_S|$ variables of integration; $\1 _{(\textbf{D}_{S})}$ is the tensor of all $1$s in $\R^{\textbf{D}_{S}}$ and $\1_{(\textbf{D}_{S})}[d]$ 
the one with $1$ in the $d$-th entry and zero otherwise; let $A,B \in \R^{\textbf{E}_S}$, $A \otimes B = \sum_{e \in [\textbf{E}]} A_{e} B_{e} \in \R$ is the \textit{Frobenius product}; let $(A,B) \in  \R^{\textbf{D}_S \times \textbf{E}_{S'}} \times \R^{\textbf{E}_{S'}}$, $A\boxdot_{\diamondsuit} B \in \R^{\textbf{D}_{S}}$ is called \textit{square product under $\diamondsuit$}(where $\diamondsuit$ operates within tensors of the same size into $\R$, e.g. the Frobenius product), where each position $d \in[\textbf{D}_{S}]$ of $A \boxdot_{\diamondsuit} B$ is $A_{(d,:)} \diamondsuit B \in \R$; $\boxdot$ is simply $\boxdot_{\otimes}$ and called \textit{square product}; let $(A,B) \in  \R^{\textbf{D}_S} \times \R^{\textbf{E}_{S'}}$, $A \triangle B = (AB_{b})_{\{b \in [\textbf{E}_{S'}]\}} \in \R^{\textbf{D}_{S} \times \textbf{E}_{S'}}$ is called \textit{bias product}. When we specify some dimensions over an operation, it means that the operation is applied through \textit{all dimensions except for} the specified ones, e.g. in $\overset{(P,K)}{\boxdot}$, $\boxdot$ is applied through all dimensions except for dimensions $P$ and $K$. When dimensions are under the operation, the operation is applied \textit{only to the specified dimensions}; $A \sim \text{St}_{\textbf{D}_S}(\alpha,\Gamma)$ indicates that the corresponding flattened vector has characteristic function $\varphi_A(t):= \E[e^{\ii \textbf{t} \otimes A }]=\exp\{-\int_{\mathbb{S}^{|\textbf{D}_S|-1}}| \textbf{t} \otimes \textbf{s} |^{\alpha}\Gamma(\dd \textbf{s})\}$ for all $t \in \R^{\textbf{D}_S}$, where $\Gamma$ is a spectral measure on $\mathbb{S}^{|\textbf{D}_S|-1}=\{z\in \R^{|\textbf{D}_S|}: \|z\|=1\}$ (more details on ``\ztitleref{sec:app:def}").

\section{Stable convolutional networks}\label{sec:DCNN}
\textbf{Shallow CNN.} The input to a convolution is a tensor $x \in \R^{C \times \textbf{P}_S}$ where $S$ is the number of spacial dimensions, $P_s$ is the size of the $s$-th spacial dimension $\forall s \in [S]$ and $C$ is the number of channels. The defining property of a convolution is that the same collection of filters (weights) is applied to multiple patches extracted from the input tensor $x$. The filter size must thus agree with the extracted patches sizes. There is great flexibility in defining the specific details of a given convolutional transform, including its striding, padding, and dilation characteristics. See \citet{dumoulin2016guide} for a comprehensive account. In this paper we consider the following general setting. We define the filters $W \in \R^{C' \times C \times \textbf{G}_S}$ where $G_s$ ($\leq P_s$ and the equality makes the network a $S$-dimensional fully connected NN) is the filter size across the $s$-th space dimension $\forall s \in [S]$ and $C'$ is the number of output channels. We also define a bias term $b \in \R^{C'}$. A convolution transform over $x$ results in an output tensor $y \in \R^{C' \times \textbf{P}'_S}$ where $P'_s$ is the size of the $s$-th spacial output dimension. $P'_s$ depends on both $P_s$ and on the convolution characteristics. We write $y=W*x+b$, where the product $*$ is defined as follows. A patch extracted from $x$ at output position $p$ is $x_{\patch p} \in \R^{C \times \textbf{G}_S}$, $x_{\patch p} = x_{1:C,\patch p}$ where $\patch: p \mapsto \patch p$ is a function that depends on the moving window chosen in the structure and returns the positions of $x$ associated with the corresponding output position $p$. When extraction happens outside of $x$, i.e. when an input position $i$ is such that $i < 1$ or $i > P$, the padded values (often a constant) are taken as input. The convolution transform at output position $p$ is thus given by $y_p = W x_{\patch p} + b \in \R^{C'}$. Finally $y \in \R^{C'}$ is obtained by stacking $y_p$ over the $\textbf{P}'_S$ output positions, i.e. the NN can be rewritten as
\begin{equation}\label{singlelayer2}
    y = [W x_{\patch p} + b]_{\{p \}}
\end{equation}
\textbf{DCNN with K inputs.} We extend the definition (\ref{singlelayer2}). A DCNN is defined by multiple layers of convolutional transforms followed by the application of an element-wise activation function $\phi$. We consider the case where all layers have the same number of channels $C^{(1)}= \dots = C^{(L)}=C$. Define $\textbf{P}^{(l)}=\textbf{P}^{(l)}_{S^{(l)}}$ for $l \in [L]\cup \{0\}$ and $\textbf{G}^{(l)}= \textbf{G}^{(l)}_{S^{(l-1)}}$ for $l \in [L]$. Consider $K$ inputs $x^{(k)} \in \R^{ C^{(0)}\times \textbf{P}^{(0)}}$, $k \in [K]$ and define $x^{(1:K)}=(x^{(1)}, \dots, x^{(K)})^T \in \R^{C^{(0)} \times \textbf{P}^{(0)} \times K}$. For each $x^{(k)}$ the convolutional structure remains constant, that is the filters and the bias terms do not depend on $k$. Since they do not even depend on the positions, a DCNN of $L$ layers with with $K$ inputs can be defined as follows
\begin{equation}\label{eq:conv_expl_joint}
\begin{cases}
f^{(0)(1:K)} = x^{(1:K)} & \in \R^{C^{(0)} \times \textbf{P}^{(0)} \times K} \\
f^{(1)(1:K)} = f^{(l)}(x^{(1:K)}) = W^{(1)} \overset{(\textbf{P}^{(1)},K)}{\boxdot} x^{(1:K)}_{\patch}+b^{(1)} \triangle \1 _{( \textbf{P}^{(1)} \times K)} & \in \R^{\infty \times \textbf{P}^{(1)} \times K}\\
f^{(l)(1:K)} = f^{(l)}(x^{(1:K)},C)=\tfrac{1}{C^{1/\alpha}}W^{(l)} \overset{ (\textbf{P}^{(l)},K) }{\boxdot}\phi(f^{(l-1)(1:K)}_{\patch})\\
\qquad \qquad \qquad +b^{(l)} \triangle \1 _{( \textbf{P}^{(l)}\times K)} & \in \R^{\infty \times \textbf{P}^{(l)} \times K}
\end{cases}
\end{equation}
where the last holds for $l=2, \dots, L$ and where $\alpha \in (0,2]$, $W^{(l)} \in \R^{\infty \times C^{(l-1)} \times \textbf{G}^{(l)}}$, $b^{(l)} \in \R^{\infty}$ for each $l \in [L]$, and $x^{(1:K)}_{\patch}= [x^{(k)}_{\patch p^{(1)}} ]_{\{(p^{(1)},k) \in  [\textbf{P}^{(1)} \times K]\}} \in \R^{C^{(0)} \times \textbf{G}^{(1)} \times \textbf{P}^{(1)} \times K}$ and $f^{(l-1)(1:K)}_{\patch}=[f^{(l-1)(k)}_{\patch p^{(l)}}]_{\{(p^{(l)},k) \in [\textbf{P}^{(l)}\times K]\}} \in \R^{C^{(l-1)} \times \textbf{G}^{(l)} \times \textbf{P}^{(l)}  \times K}$ for $l= 2,\dots L $, where $f^{(l-1)}_{\patch p^{(l)}}:= f^{(l-1)}_{(1:C^{(l-1)},\patch p^{(l)})} \in \R^{C^{(l-1)} \times \textbf{G}^{(l)}}$ is a patch extracted from $f^{(l-1)}$ at output position $p^{(l)} \in [\textbf{P}^{(l)}]$. We defer to ``\ztitleref{sec:app:patch}" for a more detailed explanation.

\textbf{Assumptions on the parameters and on the activation function.}
\begin{align}
& \textbf{H1) } \forall l \in [L], c^{(l)} \geq 1, g^{(l)} \in \textbf{G}^{(l)} \text{:}\quad W^{(l)}_{(c^{(l)},c^{(l-1)},g^{(l)})} \overset{\text{iid}}{\sim} \text{St}(\alpha,\sigma_w)\quad iid \quad  b^{(l)}_{c^{(l)}} \overset{\text{iid}}{\sim} \text{St}(\alpha,\sigma_b) \label{eq:param_dist2}
\\
& \textbf{H2) } \phi: \R \to \R \text{ with finite discontinuities: }\forall s \in \R \text{ } |\phi (s) | \leq a+b|s|^{\beta} \text{ some $a,b>0$ and $\beta<1$} \label{envelope}
\end{align}

\section{Main theorems: infinitely wide limits}\label{sec:limits}
We study the limiting distribution of $f^{(l)(1:K)} = f^{(l)}(x^{(1:K)},C)$ as $C \rightarrow \infty$, $\forall l \in [L]$. Let $f^{(l)(1:K)}_{\infty} \in \R^{\infty \times \textbf{P}^{(l)} \times K}$ be this limit (i.e. the joint limit random variable over all (infinite) channels, positions and inputs). By the Cram\'er-Wold theorem it is sufficient to prove the large $C$ asymptotic behavior of any linear combination of $f^{(l)(1:K)}_{(c^{(l)},:)}$'s (see, e.g. \citet{billingsley1999convergence} for details), where $f^{(l)(1:K)}_{(c^{(l)},:)}$ is the $c^{(l)}$-th channel of $f^{(l)(1:K)}$, and, from (\ref{eq:conv_expl_joint}), it can be rewritten as
\begin{equation}\label{sistem:fixedchannel}
\begin{sistema}
f^{(1)(1:K)}_{(c^{(1)},:)} =  W^{(1)}_{(c^{(1)},:,:)} \overset{(\textbf{P}^{(1)},K)}{\otimes} x^{(1:K)}_{\patch }+ b^{(1)}_{c^{(1)}} \1 _{( \textbf{P}^{(1)} \times K)} \quad \in \R^{\textbf{P}^{(1)}\times K} \\
f^{(l)(1:K)}_{(c^{(l)},:)} = \frac{1}{C^{1/\alpha}} W^{(l)}_{(c^{(l)},:,:)} \overset{(\textbf{P}^{(l)},K)}{\otimes}  \phi ( f^{(l-1)(1:K)}_{\patch}) +b^{(l)}_{c^{(l)}}  \1 _{( \textbf{P}^{(l)} \times K)} \quad \in \R^{\textbf{P}^{(l)}\times K}, l=2,\dots,L
\end{sistema}
\end{equation}
Note: fully connected NNs are special cases when $P^{(l)}=1$ $ \forall l \in [L] \cup \{0\}$ and the patch extraction corresponds to the whole input for each convolutional transform. Thus we prove: \textbf{Theorem \ref{teorem3}} by fixing $c^{(L)} \geq 1,l \in [L]$ and computing the limit distribution of $f^{(l)(1:K)}_{(c^{(l)},:)}$ as $C \rightarrow \infty$ and \textbf{Theorem \ref{thm_Cramer}} by applying the Cram\'er-Wold theorem. We denoted with $\stackrel{d}{\rightarrow}$, $\stackrel{p}{\rightarrow}$ and $\stackrel{a.s.}{\rightarrow}$ 
respectively the convergence in distribution, in probability and almost surely. Our proof is an alternative to the Strong Law of Large Numbers for Stable random variables. The key point of the proof lies in recognizing the exchangeability of the sequence $(f^{(l)(1:K)}_{(c^{(l)},:)} )_{c^{(l)}\geq 1}$ which allows us to apply the de Finetti theorem. First, define for each $l \in [L]$ the function $\Psi^{(l)}:\R^{\textbf{P}^{(l)}\times K} \to \R,
\Psi^{(l)} (z) := \tfrac{1}{2}\delta (z/\| z \|) +\tfrac{1}{2} \delta (-z/\| z \|)$ if $ z \neq 0$ and $0$ otherwise, where $\delta$ denotes the Dirac delta function. Recall that we don't compute the limit for the layer $l=1$, because $f^{(1)(1:K)}_{(c^{(1)},:)}$ is referred only to the $C^{(0)}$ channels of the input layer. 
\begin{theorem}\label{teorem3}[\ztitleref{sec:appA}]
For each $l = 2, \dots, L$,  $f^{(l)(1:K)}_{(c^{(l)},:)} \overset{d}{\rightarrow} f^{(l)(1:K)}_{\infty (c^{(l)},:)} \sim \text{St}_{\textbf{P}^{(l)} \times K}(\alpha,\Gamma^{(l)}_{\infty})$ as $C \rightarrow \infty$, being $\Gamma^{(l)}_{\infty}$ equal to \[\| \sigma_b \1_{(\textbf{P}^{(l)} \times K)} \|^{\alpha} \Psi^{(l)} \Big( \1_{(\textbf{P}^{(l)} \times K)} \Big)+ \int \sum_{g^{(l)} \in [\textbf{G}^{(l)}]} \|\sigma_{\omega} \phi (f_{g^{(l)}})\|^{\alpha}  \Psi^{(l)} \Big( \phi (f_{g^{(l)}}) \Big) q^{(l-1)}(\dd f_{\{g^{(l)}\in [\textbf{G}^{(l)}]\}})\] where $f_{g^{(l)}} \in \R^{\textbf{P}^{(l)} \times K}$ for each $g^{(l)} \in \textbf{G}^{(l)}$, $q^{(l-1)} = \text{St}_{\textbf{P}^{(l-1)} \times K}(\alpha,\Gamma^{(l-1)}_{\infty})$, and
\begin{align*}
    \Gamma^{(1)}_{\infty}=\Gamma^{(1)}=& \| \sigma_b \1_{(\textbf{P}^{(1)} \times K)} \|^{\alpha} \Psi^{(1)} ( \1_{(\textbf{P}^{(1)} \times K)} )\\
    &\quad + \sum_{(c^{(0)},g^{(1)}) \in [C^{(0)} \times \textbf{G}^{(1)}]} \|\sigma_{\omega} (x^{(1:K)}_{\patch})_{(c^{(0)},g^{(1)})}\|^{\alpha} \Psi^{(1)} \Big(  (x^{(1:K)}_{\patch})_{(c^{(0)},g^{(1)})}\Big)
\end{align*}
\end{theorem} 

\begin{theorem}\label{thm_Cramer}[\ztitleref{sec:appB}]
For each $l \in [L]$, $f^{(l)(1:K)} = f^{(l)}(x^{(1:K)},C) \overset{d}{\rightarrow} f^{(l)(1:K)}_{\infty} \sim \bigotimes_{c^{(l)}=1}^{\infty} \text{St}_{\textbf{P}^{(l)} \times K}(\alpha,\Gamma^{(l)}_{\infty}) $ as $C \rightarrow \infty$, where the symbol $\bigotimes$ denotes the product measure.
\end{theorem}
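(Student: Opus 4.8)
The plan is to reduce the joint statement to the single-channel result of Theorem~\ref{teorem3} by exploiting the conditional independence of the output channels given the previous layer. By the Cram\'er--Wold theorem together with L\'evy's continuity theorem, convergence of the finite-dimensional distributions over the channels follows once I show, for every finite index set $\mathcal{L} \subset \N$ and every collection of test tensors $\{\textbf{t}^{(l)}_{c^{(l)}}\}_{c^{(l)} \in \mathcal{L}} \subset \R^{\textbf{P}^{(l)} \times K}$, that the joint characteristic function
\[
\E\Big[\exp\Big\{\ii \sum_{c^{(l)} \in \mathcal{L}} \textbf{t}^{(l)}_{c^{(l)}} \otimes f^{(l)(1:K)}_{(c^{(l)},:)}\Big\}\Big]
\]
converges, as $C \to \infty$, to $\prod_{c^{(l)} \in \mathcal{L}} \varphi_{f^{(l)(1:K)}_{\infty(c^{(l)},:)}}(\textbf{t}^{(l)}_{c^{(l)}})$, which is exactly the characteristic function of $\bigotimes_{c^{(l)} \in \mathcal{L}} \text{St}_{\textbf{P}^{(l)} \times K}(\alpha, \Gamma^{(l)}_{\infty})$. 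Since the limit factorizes over $c^{(l)}$ and holds for arbitrary finite $\mathcal{L}$, the full product measure follows. The case $l=1$ is immediate: since $f^{(1)(1:K)}_{(c^{(1)},:)}$ depends only on the deterministic input and on the per-channel parameter family $\Phi^{(1)}_{c^{(1)}}$, which are independent across $c^{(1)}$, the channels are already independent with the common law of Theorem~\ref{theorem1}, and the joint law equals the product measure for every $C$.

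For $l \geq 2$ I proceed as in the proof of Theorem~\ref{teorem3}. First I condition on $f^{(l-1)(1:K)}_{(1:C,:)}$. The crucial observation is that, given the previous layer, the output channels $\{f^{(l)(1:K)}_{(c^{(l)},:)}\}_{c^{(l)} \in \mathcal{L}}$ are built from the disjoint, mutually independent, and identically distributed parameter families $\{\Phi^{(l)}_{c^{(l)}}\}_{c^{(l)} \in \mathcal{L}}$ (themselves independent of the previous layer); hence they are conditionally independent and each has the stable conditional law of Theorem~\ref{theorem2} with the \emph{same} random spectral measure $\Gamma^{(l)}_C$, which depends on the previous layer but not on $c^{(l)}$. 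The conditional joint characteristic function therefore factorizes as $\prod_{c^{(l)} \in \mathcal{L}} \exp\{-\int |\textbf{t}^{(l)}_{c^{(l)}} \otimes s^{(l)}|^{\alpha} \Gamma^{(l)}_C(\dd s^{(l)})\}$.

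Pulling out the deterministic bias contributions $\prod_{c^{(l)} \in \mathcal{L}} \exp\{-\sigma_b^{\alpha}|\textbf{t}^{(l)}_{c^{(l)}} \otimes \1_{(\textbf{P}^{(l)} \times K)}|^{\alpha}\}$ and then conditioning on the de Finetti measure $h^{(l-1)}_C$ of the patched previous-layer sequence, I reach, exactly as in \eqref{condit_proof}, an expression of the form $\E[(\int e^{-\tilde{y}_C(f)}\, h^{(l-1)}_C(\dd f))^{C}]$, now with the aggregated exponent
\[
\tilde{y}_C(f_{\{g^{(l)}\}}) := \frac{\sigma_{\omega}^{\alpha}}{C} \sum_{c^{(l)} \in \mathcal{L}} \sum_{g^{(l)} \in [\textbf{G}^{(l)}]} |\textbf{t}^{(l)}_{c^{(l)}} \otimes \phi(f_{g^{(l)}})|^{\alpha}.
\]
Since $\tilde{y}_C$ is a finite sum over $\mathcal{L}$ of terms of precisely the type appearing in $y_C$, the technical lemmas L1)--L3) and the Lagrange expansion together with $e^x=\lim_{n\to\infty}(1+x/n)^n$ of Theorem~\ref{teorem3} apply verbatim to $\tilde{y}_C$. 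Passing to the limit yields $\exp\{-\sigma_{\omega}^{\alpha} \int \sum_{c^{(l)} \in \mathcal{L}} \sum_{g^{(l)}} |\textbf{t}^{(l)}_{c^{(l)}} \otimes \phi(f_{g^{(l)}})|^{\alpha} q^{(l-1)}(\dd f)\}$, which, recombined with the bias factors, is precisely $\prod_{c^{(l)} \in \mathcal{L}} \exp\{-\int |\textbf{t}^{(l)}_{c^{(l)}} \otimes s^{(l)}|^{\alpha} \Gamma^{(l)}_{\infty}(\dd s^{(l)})\}$, the target product characteristic function.

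The conceptual core is the factorization in the conditioning step: once conditional independence across output channels and the common spectral measure $\Gamma^{(l)}_C$ are established, the joint problem collapses onto a finite-sum variant of the single-channel computation. Consequently the only genuinely new work is verifying that L1)--L3) survive the replacement of $y_C$ by the finite aggregate $\tilde{y}_C$; this is routine, since each lemma is stated for a fixed test tensor and $\tilde{y}_C$ is a finite sum of $\alpha$-moduli, so the uniform integrability bound L1) and the in-probability convergences L2)--L3) extend by the triangle inequality and the finiteness of $\mathcal{L}$. I expect this bookkeeping, rather than any new probabilistic input, to be the main (and only mild) obstacle.
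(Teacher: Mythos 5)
Your proposal is correct, and its analytic core is the same as the paper's: condition on $f^{(l-1)(1:K)}_{(1:C,:)}$ to invoke Theorem~\ref{theorem2}, condition on the de~Finetti measure $h^{(l-1)}_C$, and pass to the limit via L1)--L3) together with the Lagrange expansion and $e^x=\lim_n(1+x/n)^n$. Where you genuinely differ is in the reduction of the joint-over-channels convergence to that single computation. The paper forms the scalar-coefficient, de-biased combination $\mathcal T^{(l)}(\mathcal L,z,x^{(1:K)},C)=\sum_{c^{(l)}\in\mathcal L} z_{c^{(l)}}\big[f^{(l)(1:K)}_{(c^{(l)},:)}-b^{(l)}_{c^{(l)}}\1_{(\textbf{P}^{(l)}\times K)}\big]$, which is again a single $\textbf{P}^{(l)}\times K$-valued stable tensor whose conditional spectral measure is $\Gamma^{(l)}_C$ with $\sigma_b\leftarrow 0$ and $\sigma_{\omega}\leftarrow\sigma_{\omega}\|z\|$, and then reads off the limit from the Theorem~\ref{teorem3} machinery with rescaled parameters. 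You instead compute the full joint characteristic function with a separate test tensor $\textbf{t}^{(l)}_{c^{(l)}}$ per channel, exploiting the conditional independence of the channels given the previous layer (disjoint, independent weight and bias families) and the fact that they share the common random spectral measure $\Gamma^{(l)}_C$; this yields the aggregated exponent $\tilde y_C$ and a limit that factorizes over $c^{(l)}$. Your route buys two things the paper's presentation leaves implicit: it tests against arbitrary per-channel functionals $(\textbf{t}^{(l)}_{c^{(l)}})_{c^{(l)}\in\mathcal L}$ rather than only those of the rank-one form $\textbf{t}^{(l)}_{c^{(l)}}=z_{c^{(l)}}\textbf{t}^{(l)}$, which is what the Cram\'er--Wold device actually requires for a tensor-valued vector; and it keeps the biases inside the computation, so no separate step is needed to recombine the limit of $\mathcal T^{(l)}$ with the (exactly stable, $C$-independent, channel-wise independent) bias terms to recover the product measure $\bigotimes_{c^{(l)}}\text{St}_{\textbf{P}^{(l)}\times K}(\alpha,\Gamma^{(l)}_{\infty})$. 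The only added cost is the bookkeeping you already flag: L1) does not involve $\textbf{t}^{(l)}$ at all, L2) is applied once per $c^{(l)}\in\mathcal L$ and summed, and L3) extends via $1-e^{-\sum_{c} y^{(c)}}\le\sum_{c}\big(1-e^{-y^{(c)}}\big)$, so the extension is indeed routine.
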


\textbf{Readout layer on positions.} Fix $l=L$. We found that $f^{(L)(1:K)} \overset{d}{\rightarrow} f^{(L)(1:K)}_{\infty}$, i.e. a convergence of a sequence of $\R^{\infty \times \textbf{P}^{(L)} \times K}$-valued random variables. To gather information on the positions $\textbf{P}^{(L)}$ we consider a linear combination with respect to $\textbf{P}^{(L)}$, i.e. we project the $\infty \times \textbf{P}^{(L)} \times K$ dimensional vector $f^{(L)(1:K)}=f^{(L)}(x^{(1:K)},C)$ into one $\infty \times K$ dimensional, and we take the limit as $C \rightarrow \infty$. For a detailed explanation we defer to the ``\ztitleref{sec:appC}".

\section{Conclusions}\label{sec:conclusions}
We showed that an infinite-channel DCNN with scaled Stable parameters defines a stochastic process whose finite-dimensional distributions are multivariate SDs. The finite-dimensional distributions can be evaluated via an explicit recursion over the layers of the NN. We also established the finite-dimensional distributions arising from the NN readout layer. Several interesting theoretical developments are possible. Firstly, the results provided so far constitute the first step in establishing an NTK limit \citet{arora2019exact} arising from Stable distributed parameters in convolutional architectures. Secondly, all the established convergence results are limited to the finite dimensional distributions of the NN layers. This is not enough to guarantee the convergence of the NN seen as a random function of the input space, i.e. to establish a functional limit. Further effort is thus needed to extend the results of \citet{bracale2021largewidth} to the Stable and convolutional setting for $0<\alpha <2$. Doing so will also provide estimates on the (reduced, compared to the Gaussian case) smoothness proprieties of the limiting stochastic processes. Finally, it is necessary to devise efficient inference algorithms that allows to apply the stochastic processes introduced in this paper to current computer vision problems.
\bibliography{main}

\appendix

\newpage
\section*{\centering SUPPLEMENTARY MATERIALS}

\section*{SM: Stable random variables}\zlabel{sec:app:def}

Fix $0<\alpha \leq 2$ and $\sigma >0$ and define the following distributions.
\begin{definition} A $\R$-valued random variable $A$ is distributed as a SD with index $\alpha$, skewness $\tau \in [-1,1]$, scale $\sigma$ and shift $\mu \in \R$, and we write $A \sim \text{St}(\alpha, \tau, \sigma, \mu)$, if the characteristic function is
\[
\varphi_A(t) = \E[e^{\ii tA}] = e^{ \psi(t)}, \quad t \in \R
\quad \text{ where }
\]

\[
\psi(t)=
\begin{cases}
-\sigma^{\alpha} |t|^{\alpha}[1+ \ii \tau \tan(\frac{\alpha \pi}{2})\text{sign}(t)] +\ii \mu t & \alpha \neq 1 \\
-\sigma |t|[1+ \ii \tau \frac{2}{\pi}\text{sign}(t) \log(|t|)]+\ii \mu t & \alpha=1
\end{cases}
\]
\end{definition}
As the property 1.2.16 of \cite{samoradnitsky2017stable} shows
if $A \sim \text{St}(\alpha, \tau, \sigma,\mu)$ with $0<\alpha<2$ then $\E [|A|^r]< \infty$ for $0<r<\alpha$, and $\E [|A|^r]= \infty$ for $r \geq \alpha$.

\begin{definition} A $\R$-valued random variable $A$ is distributed as the symmetric $\alpha$-stable distribution with scale parameter $\sigma$, and we write $A \sim \text{St}(\alpha,\sigma)$, if it is stable with $\tau= \mu = 0$, i.e. the characteristic function is
\[
\varphi_A(t)= \E[e^{\ii tA}] = e^{-\sigma^{\alpha}|t|^{\alpha}}, \quad t \in \R
\]
\end{definition}
By property 1.2.3 of \cite{samoradnitsky2017stable}, $a\text{St}(\alpha, \sigma) = \sigma \text{St}(\alpha, |a|\sigma)$ for every $a\in \R$.
\begin{definition} A $\R^{D}$-valued random vector $A$ is distributed as the symmetric $D$-dimensional $\alpha$-stable distribution with scale (finite) spectral measure $\Gamma$ on $\mathbb{S}^{D-1}=\{z\in \R^{D}: \|z\|=1\}$, and we write $A \sim \text{St}_{D}(\alpha,\Gamma)$, if the characteristic function is
\[
\varphi_A(\textbf{t})= \E[e^{\ii \langle \textbf{t}, A \rangle}] = e^{-\int_{\mathbb{S}^{D-1}}|\langle \textbf{t},\textbf{s}\rangle|^{\alpha}\Gamma(\dd \textbf{s})}, \quad t \in \R^D
\]
\end{definition}

\begin{definition} A $\R^{\textbf{D}_S}$-valued random S-tensor $A$ is distributed as the symmetric $\textbf{D}_S$-dimensional $\alpha$-stable distribution with scale (finite) spectral measure $\Gamma$ on $\mathbb{S}^{|\textbf{D}_S|-1}=\{z\in \R^{|\textbf{D}_S|}: \|z\|=1\}$, and we write $A \sim \text{St}_{\textbf{D}_S}(\alpha,\Gamma)$, if the flattened tensor $A$ is distributed as $\text{St}_{|\textbf{D}_S|}(\alpha,\Gamma)$, i.e. if the characteristic function of $A$ is
\[
\varphi_A(t)= \E[e^{\ii \textbf{t} \otimes A }]=e^{-\int_{\mathbb{S}^{|\textbf{D}_S|-1}}| \textbf{t} \otimes \textbf{s} |^{\alpha}\Gamma(\dd \textbf{s})}, \quad t \in \R^{\textbf{D}_S}
\]
where $\dd \textbf{s}$ is considered flattened.
\end{definition}


\section*{SM: some useful inequalities}
During the proofs we will use without any mention the following inequality:
\begin{lemma}\label{base_inequality}
For any real values $\alpha, z_1, \dots z_n \geq 0$ there exists a constant $C=C(\alpha,n)$ such that 
\[
(z_1 + \dots +z_n)^{\alpha} \leq C (z_1^{\alpha} + \dots +z_n^{\alpha})
\]
\end{lemma}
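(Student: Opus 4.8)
The plan is to treat the inequality as a comparison between two functions that are both positively homogeneous of degree $\alpha$, and to exploit the concavity or convexity of $t \mapsto t^{\alpha}$ according to the range of $\alpha$. First I would dispose of the degenerate case $z_1 = \cdots = z_n = 0$, for which both sides vanish and any $C \geq 0$ works. For the remaining cases I would split the argument according to whether $\alpha \leq 1$ or $\alpha > 1$, since these two regimes call for opposite convexity properties of the power function.

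In the range $0 < \alpha \leq 1$ the map $t \mapsto t^{\alpha}$ is concave on $[0,\infty)$ and vanishes at the origin, hence subadditive: $(a+b)^{\alpha} \leq a^{\alpha} + b^{\alpha}$. Iterating this bound over the $n$ summands gives $(z_1 + \cdots + z_n)^{\alpha} \leq z_1^{\alpha} + \cdots + z_n^{\alpha}$, so the claim holds with $C = 1$. In the range $\alpha > 1$ subadditivity fails, and the key step is to invoke convexity instead. Applying Jensen's inequality to the convex function $t \mapsto t^{\alpha}$ with equal weights $1/n$ yields $\left(\frac{1}{n}\sum_{i=1}^{n} z_i\right)^{\alpha} \leq \frac{1}{n}\sum_{i=1}^{n} z_i^{\alpha}$, which rearranges to $(z_1 + \cdots + z_n)^{\alpha} \leq n^{\alpha-1}(z_1^{\alpha} + \cdots + z_n^{\alpha})$. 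Setting $C = \max(1, n^{\alpha-1})$ then covers both ranges simultaneously.

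I expect the only point requiring care — rather than a genuine obstacle — to be the transition between these two regimes: subadditivity supplies a constant-free bound when $\alpha \leq 1$, whereas for $\alpha > 1$ one must accept the $n$-dependent factor $n^{\alpha-1}$, which is unavoidable, since equality in Jensen at $z_1 = \cdots = z_n$ shows that $n^{\alpha-1}$ is in fact sharp. An alternative, constant-free route would be to observe that the ratio $(\sum_{i} z_i)^{\alpha}/\sum_{i} z_i^{\alpha}$ is homogeneous of degree $0$ and may therefore be studied on the compact simplex $\{z \geq 0 : \sum_{i} z_i = 1\}$, on which the denominator is continuous and strictly positive (here $\alpha > 0$ is what guarantees continuity of $t \mapsto t^{\alpha}$ up to the origin); its maximum over this compact set then furnishes an admissible $C$ by a continuity-compactness argument, without exhibiting it explicitly. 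Either route suffices, and I would present the case-split version since it yields the explicit value of $C$ used in the subsequent estimates.
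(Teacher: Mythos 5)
Your proof is correct, but it takes a different route from the paper's. The paper's argument is a one-line bound via the maximum: writing $Z=\max\{z_1,\dots,z_n\}$, one has $(z_1+\dots+z_n)^{\alpha}\leq (nZ)^{\alpha}=n^{\alpha}Z^{\alpha}\leq n^{\alpha}(z_1^{\alpha}+\dots+z_n^{\alpha})$, which yields $C=n^{\alpha}$ uniformly for all $\alpha\geq 0$ with no case distinction. Your convexity/concavity split buys a sharper constant, $C=\max(1,n^{\alpha-1})$, together with the observation that it is optimal (equality at $z_1=\dots=z_n$ for $\alpha>1$, and at a single nonzero coordinate for $\alpha\leq 1$); the cost is the two-regime bookkeeping and the appeal to subadditivity and Jensen, neither of which is needed for the paper's purposes, since the lemma is only ever used qualitatively (any finite $C(\alpha,n)$ suffices for the moment bounds in L1 and L1.1). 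Your alternative compactness argument is also valid but strictly weaker in that it produces no explicit constant. One small point of care: the statement allows $\alpha=0$, which your case split as written ($0<\alpha\leq 1$ versus $\alpha>1$) does not formally cover, though that case is trivial; the paper's max argument handles it without comment.
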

\begin{proof}
Let $Z= \max \{z_1, \dots ,z_n \}$. Thus we get
\[
(z_1 + \dots +z_n)^{\alpha} \leq (nZ)^{\alpha}=n^{\alpha}Z^{\alpha}\leq n^{\alpha}(z_1^{\alpha} + \dots +z_n^{\alpha})
\]
In particular $C=C(\alpha,n)=n^{\alpha}$.
\end{proof}
We give an important intuition of the reason why the proofs that will follow work. Intuitively we will provide an alternative proof of the strong law of large numbers for Stable random variables using the de Finetti theorem regarding the exchangeability of sequences of random variables. To this end we will have to require that the expected value of the stochastic process is finite. Using the above Lemma \ref{base_inequality}, from assumption (\ref{envelope}) we get 
\[|\phi(s)|^{\alpha} \leq (a+b|s|^{\beta})^{\alpha}\leq 2^{\alpha}(a^{\alpha}+b^{\alpha}|s|^{\beta \alpha})
\]
When $s$ is $\alpha$-stable distributed with any skewness, scale and shift parameters, then
\[
\E[|\phi(s)|^{\alpha}] \leq 2^{\alpha}a^{\alpha}+2^{\alpha}b^{\alpha}\E[|s|^{\beta \alpha}]< \infty
\]
which is finite since $\beta<1$ thus $\beta \alpha < \alpha$. Then, the assumption $\beta <1$ is essential to guarantee the existence of the expected value of the stochastic process. However, this assumption also allows us to apply Jensen's inequality in the following sense: for any positive random variable $s$,
\[
\E[s^{\beta}] \leq (\E[s])^{\beta}
\]
We will use these inequalities repeatedly during the proofs.

\section*{SM: the patch operator}\zlabel{sec:app:patch}

Referring to the definition (\ref{eq:conv_expl_joint}) of DCNN we said that for each $l \in [L]$, $f^{(l-1)}_{\patch p^{(l)}} \in \R^{C^{(l-1)} \times \textbf{G}^{(l)}}$ is a patch extracted from $f^{(l-1)}$ at output position $p^{(l)} \in [\textbf{P}^{(l)}]$ and is defined by $f^{(l-1)}_{\patch p^{(l)}} = f^{(l-1)}_{(1:C^{(l-1)},\patch p^{(l)})}$. The patch operator is a map $\patch : [\textbf{P}^{(l)}] \to (\Z_1 \times \dots \times \Z_{S^{(l-1)}})^{\textbf{G}^{(l)}}$ that for each $p^{(l)}$, returns $\patch p^{(l)}$ that is a $S^{(l-1)}$ tensor of dimension $\textbf{G}^{(l)}$ containing some spatial indexes of $f^{(l-1)}$ depending on the moving window chosen in the structure (padding, stride etc). Refer to ``\ztitleref{sec:app:patch}" for a more detailed explanation.

More precisely, for each $g^{(l)} \in \textbf{G}^{(l)}$, $\patch p^{(l)}_{g^{(l)}} =(i_1, \dots , i_{S^{(l-1)}}) \in \Z_1 \times \dots \times \Z_{S^{(l-1)}}$ where we could have $i_{s^{(l-1)}} < 1$ or $i_{s^{(l-1)}} > P_{s^{(l-1)}}$ for some $s^{(l-1)} \in [S^{(l-1)}]$ because extraction could happens outside of $f^{(l-1)}$. Thus, two possibilities are allowed: 1) there exists $s^{(l-1)} \in [S^{(l-1)}]$ such that $i_{s^{(l-1)}} < 1$ or $i_{s^{(l-1)}} > P_{s^{(l-1)}}$. In that case $(f^{(l-1)}_{\patch p^{(l)}})_{(c^{(l-1)},g^{(l)})} = f^{(l-1)}_{(c^{(l-1)},\patch p^{(l)}_{g^{(l)}})}=0$ for each $c^{(l-1)} \in [C^{(l-1)}]$; 2) $i_{s^{(l-1)}} \in [P_{s^{(l-1)}}]$ for all $s^{(l-1)} \in [S^{(l-1)}]$. In that case there exist $p^{(l-1)} \in [\textbf{P}^{(l-1)}]$ such that $\patch p^{(l)}_{g^{(l)}} = p^{(l-1)}$, then $(f^{(l-1)}_{\patch p^{(l)}})_{(c^{(l-1)},g^{(l)})} = f^{(l-1)}_{(c^{(l-1)},p^{(l-1)})}=0$ for each $c^{(l-1)} \in [C^{(l-1)}]$.

\section*{SM A}\zlabel{sec:appA}
Here we compute the limit distribution of $f^{(l)(1:K)}_{(c^{(l)},:)}$ as $C \rightarrow \infty$. First we prove the following two theorems (see the corresponding appendix for the proof):

\begin{theorem}\label{theorem1} [\ztitleref{sec:appA.1}]
$f^{(1)(1:K)}_{(c^{(1)},:)} \sim \text{St}_{\textbf{P}^{(1)} \times K}(\alpha,\Gamma^{(1)})$, where 

\begin{align*}
    \Gamma^{(1)}=& \| \sigma_b \1_{(\textbf{P}^{(1)} \times K)} \|^{\alpha} \Psi^{(1)} ( \1_{(\textbf{P}^{(1)} \times K)} )\\
    &\quad + \sum_{(c^{(0)},g^{(1)}) \in [C^{(0)} \times \textbf{G}^{(1)}]} \|\sigma_{\omega} (x^{(1:K)}_{\patch})_{(c^{(0)},g^{(1)})}\|^{\alpha} \Psi^{(1)} \Big(  (x^{(1:K)}_{\patch})_{(c^{(0)},g^{(1)})}\Big)
\end{align*}

where $(x^{(1:K)}_{\patch})_{(c^{(0)},g^{(1)})}=\big[ (x^{(k)}_{\patch p^{(1)}})_{(c^{(0)},g^{(1)})} \big]_{\{(p^{(1)},k)\in [\textbf{P}^{(1)} \times K]\}}$.
\end{theorem}
\begin{theorem}\label{theorem2}[\ztitleref{sec:appA.2}]
For $l=2, \dots ,L$,  $f^{(l)(1:K)}_{(c^{(l)},:)}|f^{(l-1)(1:K)}_{(1:C,:)} \sim \text{St}_{\textbf{P}^{(l)} \times K}(\alpha,\Gamma_C^{(l)})$, where 

\begin{align*}
    \Gamma^{(l)}_C &= \| \sigma_b \1_{(\textbf{P}^{(l)} \times K)} \|^{\alpha} \Psi^{(l)} \Big( \1_{(\textbf{P}^{(l)} \times K)} \Big)\\
    & + \tfrac{1}{C}\sum_{(c^{(l-1)},g^{(l)}) \in [C \times \textbf{G}^{(l)}]} \|\sigma_{\omega} \phi (f^{(l-1)(1:K)}_{\patch})_{(c^{(l-1)},g^{(l)})}\|^{\alpha} \Psi^{(l)} \Big( \phi (f^{(l-1)(1:K)}_{\patch})_{(c^{(l-1)},g^{(l)})} \Big)
\end{align*}

with $ (f^{(l-1)(1:K)}_{\patch})_{(c^{(l-1)},g^{(l)})}=\Big[ (f^{(l-1)(k)}_{\patch p^{(l)}})_{(c^{(l-1)},g^{(l)})} \Big]_{\{(p^{(l)},k)\in [\textbf{P}^{(l)} \times K]\}}$.
\end{theorem}

Now, for $l \in [L]$ we compute the limit distribution of $f^{(l)(1:K)}_{(c^{(l)},:)}$ as $C \rightarrow \infty$. First recall that for $l=1$, being $f^{(1)(1:K)}_{(c^{(1)},:)}$ referred only to the $C^{(0)}$ channels of the input layer, then $f^{(1)(1:K)}_{(c^{(1)},:)} \overset{d}{\rightarrow} \text{St}_{\textbf{P}^{(1)} \times K}(\alpha,\Gamma^{(1)}_{\infty})$ constantly as $C \rightarrow \infty$, where we have defined $\Gamma^{(1)}_{\infty}=\Gamma^{(1)}$. Thus we compute the limit for all the others layers. We prove that for each $l = 2, \dots, L$,  $f^{(l)(1:K)}_{(c^{(l)},:)} \overset{d}{\rightarrow} \text{St}_{\textbf{P}^{(l)} \times K}(\alpha,\Gamma^{(l)}_{\infty})$ as $C \rightarrow \infty$, where
\[
\begin{split}
    \Gamma^{(l)}_{\infty} = & \| \sigma_b \1_{(\textbf{P}^{(l)} \times K)} \|^{\alpha} \Psi^{(l)} \Big( \1_{(\textbf{P}^{(l)} \times K)} \Big)+ \\
    &+ \int \sum_{g^{(l)} \in [\textbf{G}^{(l)}]} \|\sigma_{\omega} \phi (f_{g^{(l)}})\|^{\alpha} \Psi^{(l)} \Big( \phi (f_{g^{(l)}}) \Big) q^{(l-1)}(\dd f_{\{g^{(l)}\in [\textbf{G}^{(l)}]\}})
\end{split}
\]
with $f_{g^{(l)}} \in \R^{\textbf{P}^{(l)} \times K}$ for each $g^{(l)} \in \textbf{G}^{(l)}$ and $q^{(l-1)} = \text{St}_{\textbf{P}^{(l-1)} \times K}(\alpha,\Gamma^{(l-1)}_{\infty})$.

For the proof we will need the following proposition which is a direct consequence of Exercise 2.3.4 of \cite{samoradnitsky2017stable}.
\begin{prop}\label{prop1} If $A \sim \text{St}_{D}(\alpha,\Gamma)$ then for each $\textbf{u} \in \R^D$ the $1$-dimensional r.v. $\langle \textbf{u},A \rangle \sim \text{St}(\alpha, \tau(\textbf{u}), \sigma(\textbf{u}), \mu(\textbf{u}))$ where
\[
\begin{split}
&\sigma(\textbf{u}) = \Big( \int_{\mathbb{S}^{D-1}}|\langle \textbf{u},\textbf{s}\rangle|^{\alpha}\Gamma(\dd \textbf{s}) \Big)^{1/\alpha}\\
&\tau(\textbf{u}) = \sigma(\textbf{u})^{-1} \int_{\mathbb{S}^{D-1}}|\langle \textbf{u},\textbf{s}\rangle|^{\alpha} \text{sign}(\langle \textbf{u},\textbf{s}\rangle)\Gamma(\dd \textbf{s})\\
& \mu(\textbf{u}) = \begin{cases}
 0 & \alpha \neq 1\\
 -\frac{2}{\pi}\int_{\mathbb{S}^{D-1}}\langle \textbf{u},\textbf{s}\rangle \log( |\langle \textbf{u},\textbf{s}\rangle|)\Gamma(\dd \textbf{s}) &  \alpha =1
\end{cases}
\end{split}
\]
\end{prop}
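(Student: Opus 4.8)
The plan is to reduce the $D$-dimensional statement to a one-dimensional characteristic function identity and then read off the three parameters by comparison with Definition 1. The starting observation is that for the linear functional $\langle \textbf{u}, A\rangle$ one has, for every $t \in \R$,
\[
\varphi_{\langle \textbf{u}, A\rangle}(t) = \E[e^{\ii t \langle \textbf{u}, A\rangle}] = \E[e^{\ii \langle t\textbf{u}, A\rangle}] = \varphi_A(t\textbf{u}),
\]
so the whole proposition amounts to evaluating the multivariate stable characteristic exponent of Exercise 2.3.4 of \citet{samoradnitsky2017stable} along the ray $\{t\textbf{u} : t \in \R\}$ and matching the outcome with the one-dimensional exponent $\psi$ of Definition 1.

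First I would substitute $\textbf{t} = t\textbf{u}$ into the integral exponent and exploit the positive $\alpha$-homogeneity of $|\cdot|^{\alpha}$ together with the multiplicativity of the sign, namely $|\langle t\textbf{u}, \textbf{s}\rangle|^{\alpha} = |t|^{\alpha} |\langle \textbf{u}, \textbf{s}\rangle|^{\alpha}$ and $\text{sign}(\langle t\textbf{u}, \textbf{s}\rangle) = \text{sign}(t)\,\text{sign}(\langle \textbf{u}, \textbf{s}\rangle)$. This pulls all the $t$-dependence outside the integral over $\mathbb{S}^{D-1}$, leaving integrals against $\Gamma$ that define the candidate parameters, which I would then identify with the corresponding pieces of $\psi$.

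For $\alpha \neq 1$ the real part collapses to $-|t|^{\alpha} \int_{\mathbb{S}^{D-1}} |\langle \textbf{u}, \textbf{s}\rangle|^{\alpha}\, \Gamma(\dd\textbf{s})$, which I identify with $-\sigma(\textbf{u})^{\alpha} |t|^{\alpha}$ and hence reads off $\sigma(\textbf{u})$; the imaginary part, carrying the factor $\tan(\alpha\pi/2)\,\text{sign}(t)$, is proportional to $\int_{\mathbb{S}^{D-1}} |\langle \textbf{u}, \textbf{s}\rangle|^{\alpha}\, \text{sign}(\langle \textbf{u}, \textbf{s}\rangle)\,\Gamma(\dd\textbf{s})$, and normalizing by $\sigma(\textbf{u})$ produces the skewness $\tau(\textbf{u})$; there is no surviving term linear in $t$, so $\mu(\textbf{u}) = 0$. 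Comparing term by term with the $\alpha\neq 1$ branch of $\psi$ then settles this case.

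The delicate case, and the one I expect to be the main obstacle, is $\alpha = 1$, where the integrand contains $\log|\langle t\textbf{u}, \textbf{s}\rangle|$ rather than a clean homogeneity factor. Here I would split $\log|\langle t\textbf{u}, \textbf{s}\rangle| = \log|t| + \log|\langle \textbf{u}, \textbf{s}\rangle|$: the $\log|t|$ piece recombines with $|t|\,\text{sign}(t) = t$ to reproduce exactly the $\tfrac{2}{\pi}\text{sign}(t)\log|t|$ skewness term of the one-dimensional $\alpha=1$ exponent, fixing $\tau(\textbf{u})$, while the residual $\log|\langle \textbf{u}, \textbf{s}\rangle|$ piece is linear in $t$ and is precisely what generates the nonzero shift $\mu(\textbf{u}) = -\tfrac{2}{\pi}\int_{\mathbb{S}^{D-1}} \langle \textbf{u}, \textbf{s}\rangle \log(|\langle \textbf{u}, \textbf{s}\rangle|)\,\Gamma(\dd\textbf{s})$. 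Care is needed to keep the sign conventions of Definition 1 consistent throughout, and to verify that the relevant integrals are finite, which follows from $\Gamma$ being a finite measure on the compact sphere together with the local integrability of $u\log|u|$; once this is in place, matching coefficients against $\psi$ finishes the argument.
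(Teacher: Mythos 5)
Your argument is correct, but note that the paper does not actually prove this proposition: it is stated as ``a direct consequence of Exercise 2.3.4 of \citet{samoradnitsky2017stable}'' and no derivation is given. What you have written is essentially the standard derivation behind that exercise --- evaluate the multivariate characteristic exponent at $\textbf{t}=t\textbf{u}$, use $\alpha$-homogeneity and sign multiplicativity to extract $|t|^{\alpha}$ and $\mathrm{sign}(t)$, and for $\alpha=1$ split $\log|\langle t\textbf{u},\textbf{s}\rangle|=\log|t|+\log|\langle\textbf{u},\textbf{s}\rangle|$ so that the $\log|t|$ piece feeds the skewness term and the residual piece, being linear in $t$, produces the shift $\mu(\textbf{u})$. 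Two small points deserve care. First, the paper's Definition~3 only gives the \emph{symmetric} characteristic function $\exp\{-\int|\langle\textbf{t},\textbf{s}\rangle|^{\alpha}\Gamma(\dd\textbf{s})\}$, with no imaginary part in the exponent; to obtain the stated formulas for $\tau(\textbf{u})$ and $\mu(\textbf{u})$ you must start from the general (possibly skewed) multivariate stable form of Samorodnitsky--Taqqu, as you implicitly do. If one works only from Definition~3, the linear functional is immediately symmetric $\alpha$-stable with scale $\sigma(\textbf{u})$ and the $\tau,\mu$ integrals vanish because the spectral measures appearing in the paper are symmetric under $\textbf{s}\mapsto-\textbf{s}$; either way the conclusion is consistent. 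Second, the paper's Definition~1 carries a $+\ii\tau\tan(\alpha\pi/2)\,\mathrm{sign}(t)$ where the standard reference has a minus sign, so the ``matching coefficients'' step you flag does require fixing one convention and sticking to it; with that done, your proof goes through.
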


\begin{proof}
Fix $l=2, \dots, L$ and, for each $C$, let $h^{(l)}_C$ denote the de Finetti random probability measure of the exchangeable sequence $((f^{(l)(1:K)}_{\patch})_{(c^{(l)},:)})_{c^{(l)} \geq 1}$, i.e. $(f^{(l)(1:K)}_{\patch})_{(c^{(l)},:)}|h^{(l)}_C \overset{iid}{\sim} h^{(l)}_C$.  Consider the induction hypothesis that as $C \rightarrow \infty$ 

\[
h^{(l-1)}_C \overset{w}{\rightarrow} q^{(l-1)}
\]

where $q^{(l-1)}=\text{St}_{\textbf{P}^{(l-1)} \times K}(\alpha,\Gamma^{(l-1)}_{\infty})$ and the finite measure $\Gamma^{(l-1)}_{\infty}$ will be specified.
For $l>1$ and any $\textbf{t}^{(l)}:=[t^{(l)(k)}_{p^{(l)}}]_{\{(p^{(l)},k) \in [\textbf{P}^{(l)} \times K] \}} \in \R^{\textbf{P}^{(l)} \times K}$,

\begin{equation}\label{eq:conditioned}
\begin{split}
&\varphi_{\big( f^{(l)(1:K)}_{(c^{(l)},:)}\big)} (\textbf{t}^{(l)})\\
=& \E \Big[ \exp \Big\{ \ii \textbf{t}^{(l)}\otimes f^{(l)(1:K)}_{(c^{(l)},:)} \Big\} \Big]\\
=& \E \Big[ \E \Big[ \exp \Big\{ \ii \textbf{t}^{(l)}\otimes f^{(l)(1:K)}_{(c^{(l)},:)} \Big\} \Big|f^{(l-1)(1:K)}_{(1:C,:)} \Big] \Big]\\
=& \E \Big[ \exp \Big\{ - \int_{\mathbb{S}^{|\textbf{P}^{(l)}\times K|-1}} | \textbf{t}^{(l)} \otimes s^{(l)}|^{\alpha} \Gamma^{(l)}_C (\dd s^{(l)}) \Big\} \Big]\\
=& \exp \Big\{ -\sigma^{\alpha}_b|\textbf{t}^{(l)} \otimes  \1_{(\textbf{P}^{(l)} \times K)}|^{\alpha} \Big\} \times \\
\quad & \times \E \Big[ \exp \Big\{ -\frac{\sigma^{\alpha}_{\omega}}{C}\sum_{(c^{(l-1)},g^{(l)}) \in [C \times \textbf{G}^{(l)}]} | \textbf{t}^{(l)} \otimes \phi (f^{(l-1)(1:K)}_{\patch})_{(c^{(l-1)},g^{(l)})} |^{\alpha} \Big\} \Big]\\
=& \exp \Big\{ -\sigma^{\alpha}_b|\textbf{t}^{(l)} \otimes  \1_{(\textbf{P}^{(l)} \times K)}|^{\alpha} \Big\} \times \\
\quad & \times \E \Big[ \E \Big[ \exp \Big\{ -\frac{\sigma^{\alpha}_{\omega}}{C}\sum_{(c^{(l-1)},g^{(l)}) \in [C \times \textbf{G}^{(l)}]} | \textbf{t}^{(l)} \otimes \phi (f^{(l-1)(1:K)}_{\patch})_{(c^{(l-1)},g^{(l)})} |^{\alpha} \Big\} \Big] \Big| h^{(l-1)}_{C} \Big]\\
=& \exp \Big\{ -\sigma^{\alpha}_b|\textbf{t}^{(l)} \otimes  \1_{(\textbf{P}^{(l)} \times K)}|^{\alpha} \Big\} \times \\
\quad & \times \E \Big[ \Big( \int \exp \Big\{ -\frac{\sigma^{\alpha}_{\omega}}{C} \sum_{g^{(l)} \in [\textbf{G}^{(l)}]}| \textbf{t}^{(l)} \otimes \phi (f_{g^{(l)}}) |^{\alpha} \Big\} h^{(l-1)}_C(\dd f_{\{g^{(l)}\in [\textbf{G}^{(l)}]\}}) \Big)^{C} \Big]
\end{split}
\end{equation}
where $f_{g^{(l)}}\in \R^{\textbf{P}^{(l)} \times K}$ for each $g^{(l)} \in \textbf{G}^{(l)}$. Hereafter we show the limiting behaviour. In order to do this we need the following lemmas:

\begin{itemize}
    \item[L1)] For each $l=2,\dots, L$,
    $\sup_C \int \sum_{g^{(l)} \in [\textbf{G}^{(l)}]} \|\phi(f_{g^{(l)}})\|^{\alpha} h_C^{(l-1)}(\dd f_{g^{(l)} \in [\textbf{G}^{(l)}]}) < \infty $
    
    \item[L1.1)] There exists $\epsilon >0$ such that, $\sup_C \E[\sum_{g^{(l)} \in [\textbf{G}^{(l)}]} \|\phi(f^{(l-1)(1:K)}_{\patch})_{(c^{(l-1)},g^{(l)})}\|^{\alpha+\epsilon} | h^{(l-2)}_C ] < \infty$ for each $l=2,\dots, L$
    
    \item[L2)] $\int \sum_{g^{(l)} \in [\textbf{G}^{(l)}]} |\textbf{t}^{(l)} \otimes \phi(f_{g^{(l)}})|^{\alpha} h^{(l-1)}_C(\dd f_{g^{(l)} \in [\textbf{G}^{(l)}]}) \overset{p}{\rightarrow} \int \sum_{g^{(l)} \in [\textbf{G}^{(l)}]} |\textbf{t}^{(l)} \otimes \phi(f_{g^{(l)}})|^{\alpha} q^{(l-1)}(\dd f_{g^{(l)} \in [\textbf{G}^{(l)}]})$ as $C \rightarrow \infty$
    
    \item [L3)] $\int \sum_{g^{(l)} \in [\textbf{G}^{(l)}]} \|\phi(f_{g^{(l)}})\|^{\alpha} [1-\exp\{-\frac{\sigma^{\alpha}_{\omega}}{C} \sum_{g^{(l)} \in [\textbf{G}^{(l)}]}| \textbf{t}^{(l)} \otimes \phi (f_{g^{(l)}}) |^{\alpha}\}] h^{(l-1)}_C(\dd f_{g^{(l)} \in [\textbf{G}^{(l)}]}) \overset{p}{\rightarrow} 0$ as $C \rightarrow \infty$
\end{itemize}

\subsection*{Proof of L1}
For $l=2$, for each $c^{(1)} \geq 1$, from assumptions (\ref{eq:param_dist2}) and (\ref{envelope}) and from Lemma \ref{base_inequality} we get
\[
\begin{split}
 &\E \Big[ \sum_{g^{(2)} \in [\textbf{G}^{(2)}]} \| \phi(f^{(1)(1:K)}_{\patch})_{(c^{(1)},g^{(2)})}\|^{\alpha} \Big] \\
 & = \E \Big[ \sum_{g^{(2)} \in [\textbf{G}^{(2)}]} \sum_{p^{(2)} \in [\textbf{P}^{(2)}]} \sum_{k \in [K]} | \phi(f^{(1)(k)}_{\patch p^{(2)}})_{(c^{(1)},g^{(2)})}|^{\alpha} \Big] \\
 & = \sum_{g^{(2)} \in [\textbf{G}^{(2)}]} \sum_{p^{(2)} \in [\textbf{P}^{(2)}]} \sum_{k \in [K]} \E \Big[ | \phi(f^{(1)(k)}_{\patch p^{(2)}})_{(c^{(1)},g^{(2)})}|^{\alpha} \Big] \\
 & \leq \sum_{g^{(2)} \in [\textbf{G}^{(2)}]} \sum_{p^{(2)} \in [\textbf{P}^{(2)}]} \sum_{k \in [K]} \E \Big[ ( a + b|(f^{(1)(k)}_{\patch p^{(2)}})_{(c^{(1)},g^{(2)})}|^{\beta})^{\alpha} \Big] \\
 & \leq 2^{\alpha} \sum_{g^{(2)} \in [\textbf{G}^{(2)}]} \sum_{p^{(2)} \in [\textbf{P}^{(2)}]} \sum_{k \in [K]} \E \Big[  a^{ \alpha} + b^{\alpha}|(f^{(1)(k)}_{\patch p^{(2)}})_{(c^{(1)},g^{(2)})}|^{ \alpha \beta} \Big] \\
 & = 2^{\alpha}|\textbf{G}^{(2)}| |\textbf{P}^{(2)}| K a^{\alpha} + 2^{\alpha}b^{\alpha} \sum_{g^{(2)} \in [\textbf{G}^{(2)}]} \sum_{p^{(2)} \in [\textbf{P}^{(2)}]} \sum_{k \in [K]}\E \Big[ |(f^{(1)(k)}_{\patch p^{(2)}})_{(c^{(1)},g^{(2)})}|^{\alpha \beta} \Big] \\
 & < \infty
\end{split}
\]

where we used that $(f^{(1)(k)}_{\patch p^{(2)}})_{(c^{(1)},g^{(2)})}$, by Proposition \ref{prop1} is distributed according to a SD with index $\alpha$ (and some skewness, scale and shift parameters) and then, being $\alpha \beta < \alpha$, $\E \Big[ |(f^{(1)(k)}_{\patch p^{(2)}})_{(c^{(1)},g^{(2)})}|^{ \alpha \beta} \Big]< +\infty$ . Now assuming that L1) is true for $l-2$ we prove that it is true for $l-1$. First, from assumptions (\ref{eq:param_dist2}) and (\ref{envelope}) and from Lemma \ref{base_inequality}, we compute the following

\[
\begin{split}
& \E \Big[ \sum_{g^{(l)} \in [\textbf{G}^{(l)}]} \| \phi(f^{(l-1)(1:K)}_{\patch})_{(c^{(l-1)},g^{(l)})}\|^{\alpha} \Big| f^{(l-2)(1:K)}_{(1:C,:)}\Big] \\
 &= \E \Big[ \sum_{g^{(l)} \in [\textbf{G}^{(l)}]} \sum_{p^{(l)} \in [\textbf{P}^{(l)}]} \sum_{k \in [K]} | \phi(f^{(l-1)(k)}_{\patch p^{(l)}})_{(c^{(l-1)},g^{(l)})} |^{\alpha} \Big| f^{(l-2)(1:K)}_{(1:C,:)} \Big] \\
 &= \sum_{g^{(l)} \in [\textbf{G}^{(l)}]} \sum_{p^{(l)} \in [\textbf{P}^{(l)}]} \sum_{k \in [K]} \E \Big[ | \phi(f^{(l-1)(k)}_{\patch p^{(l)}})_{(c^{(l-1)},g^{(l)})} |^{\alpha} \Big| f^{(l-2)(1:K)}_{(1:C,:)} \Big] \\
 & \leq \sum_{g^{(l)} \in [\textbf{G}^{(l)}]} \sum_{p^{(l)} \in [\textbf{P}^{(l)}]} \sum_{k \in [K]} \E \Big[ ( a + b | (f^{(l-1)(k)}_{\patch p^{(l)}})_{(c^{(l-1)},g^{(l)})} |^{\beta} )^{ \alpha}\Big| f^{(l-2)(1:K)}_{(1:C,:)} \Big] \\
 & \leq 2^{\alpha}|\textbf{G}^{(l)}| |\textbf{P}^{(l)}| K a^{ \alpha} + 2^{ \alpha} b^{ \alpha} \sum_{g^{(l)} \in [\textbf{G}^{(l)}]} \sum_{p^{(l)} \in [\textbf{P}^{(l)}]} \sum_{k \in [K]} \E \Big[ |(f^{(l-1)(k)}_{\patch p^{(l)}})_{(c^{(l-1)},g^{(l)})} |^{\beta \alpha}\Big| f^{(l-2)(1:K)}_{(1:C,:)} \Big]
 \end{split} 
\]

Recall that for each $p^{(l)} \in [\textbf{P}^{(l)}]$, $(f^{(l-1)(k)}_{\patch p^{(l)}})_{(c^{(l-1)},g^{(l)})}$ could be equal to $0$ or there exists an unique position $p^{(l-1)} \in [\textbf{P}^{(l-1)}]$ such that  $(f^{(l-1)(k)}_{\patch p^{(l)}})_{(c^{(l-1)},g^{(l)})} = f^{(l-1)(k)}_{(c^{(l-1)},p^{(l-1)})}$, thus we get

\[
\begin{split}
& \E \Big[ \sum_{g^{(l)} \in [\textbf{G}^{(l)}]} \| \phi(f^{(l-1)(1:K)}_{\patch})_{(c^{(l-1)},g^{(l)})}\|^{\alpha} \Big| f^{(l-2)(1:K)}_{(1:C,:)}\Big] \\
 & \leq 2^{ \alpha} |\textbf{G}^{(l)}| |\textbf{P}^{(l)}| K a^{ \alpha} + 2^{ \alpha}b^{ \alpha} \sum_{p^{(l-1)} \in [\textbf{P}^{(l-1)}]} \sum_{k \in [K]} \E \Big[ |f^{(l-1)(k)}_{(c^{(l-1)},p^{(l-1)})} |^{\beta \alpha}\Big| f^{(l-2)(1:K)}_{(1:C,:)} \Big]
 \end{split} 
\]

Moreover, from theorem \ref{theorem2} we know that  $f^{(l-1)(1:K)}_{(c^{(l-1)},:)} | f^{(l-2)(1:K)}_{(1:C,:)} \sim \text{St}_{\textbf{P}^{(l-1)} \times K}(\alpha,\Gamma^{(l-1)})$ and from proposition \ref{prop1}, denoted $U(p^{(l-1)},k) = \1_{(\textbf{P}^{(l-1)} \times K)}[(p^{(l-1)},k)]$, for each $(p^{(l-1)},k) \in [\textbf{P}^{(l-1)} \times K]$ we have $f^{(l-1)(k)}_{(c^{(l-1)},p^{(l-1)})} | f^{(l-2)(1:K)}_{(1:C,:)} = U(p^{(l-1)},k) \otimes f^{(l-1)(1:K)}_{(c^{(l-1)},:)} | f^{(l-2)(1:K)}_{(1:C,:)} \sim \text{St}\Big(\alpha,\tau(U(p^{(l-1)},k)), \sigma(U(p^{(l-1)},k)), \mu (U(p^{(l-1)},k))\Big)\sim \sigma(U(p^{(l-1)},k)) \text{St}\Big(\alpha,\tau(U(p^{(l-1)},k)), 1 , \mu (U(p^{(l-1)},k))\Big)$. Since $\beta \alpha < \alpha$ we get
\[
\begin{split}
& \E \Big[ \sum_{g^{(l)} \in [\textbf{G}^{(l)}]} \| \phi(f^{(l-1)(1:K)}_{\patch})_{(c^{(l-1)},g^{(l)})}\|^{\alpha} \Big| f^{(l-2)(1:K)}_{(1:C,:)}\Big] \\
& \leq 2^{\alpha}|\textbf{G}^{(l)}| |\textbf{P}^{(l)}| K a^{ \alpha} +\\
& \quad + 2^{ \alpha}b^{ \alpha} \sum_{p^{(l-1)} \in [\textbf{P}^{(l-1)}]} \sum_{k \in [K]} \sigma\Big( U(p^{(l-1)},k) \Big)^{\beta \alpha} \underset{<\infty}{\underbrace{\E \Big[ |\text{St}(\alpha,\tau (U(p^{(l-1)},k)),1,\mu (U(p^{(l-1)},k)))|^{\beta \alpha}\Big]}}\\
& \leq 2^{ \alpha}|\textbf{G}^{(l)}| |\textbf{P}^{(l)}| K a^{ \alpha} +\\
&\quad + 2^{ \alpha}b^{\alpha} \mathcal{M} \sum_{p^{(l-1)} \in [\textbf{P}^{(l-1)}]} \sum_{k \in [K]} \Bigg( \int_{\mathbb{S}^{|\textbf{P}^{(l-1)}\times K|-1}} \Big|  U(p^{(l-1)},k) \otimes s \Big|^{\alpha} \Gamma^{(l-1)}(\dd s) \Bigg)^{\beta }
\end{split} 
\]

where $\mathcal{M}= \max _{(p^{(l-1)},k) \in [\textbf{P}^{(l-1)} \times k]} \E \Big[ |\text{St}(\alpha,\tau (U(p^{(l-1)},k)),1,\mu (U(p^{(l-1)},k)))|^{\beta \alpha}\Big] < + \infty$. Then,

\begin{equation}\label{equationL1A}
\begin{split}
&\E \Big[ \sum_{g^{(l)} \in [\textbf{G}^{(l)}]} \| \phi(f^{(l-1)(1:K)}_{\patch})_{(c^{(l-1)},g^{(l)})}\|^{\alpha} \Big| h_C^{(l-2)}\Big] \\
&= \E \Big[\E \Big[ \sum_{g^{(l)} \in [\textbf{G}^{(l)}]} \| \phi(f^{(l-1)(1:K)}_{\patch})_{(c^{(l-1)},g^{(l)})}\|^{\alpha} \Big| f^{(l-2)(1:K)}_{(1:C,:)}\Big]\Big| h_C^{(l-2)}\Big] \\
& \leq 2^{\alpha}|\textbf{G}^{(l)}| |\textbf{P}^{(l)}| K a^{ \alpha} +\\
& \quad + 2^{\alpha}b^{\alpha} \mathcal{M} \sum_{p^{(l-1)} \in [\textbf{P}^{(l-1)}]} \sum_{k \in [K]} \E\Bigg[ \Bigg( \int_{\mathbb{S}^{|\textbf{P}^{(l-1)}\times K|-1}} \Big|  U(p^{(l-1)},k) \otimes s \Big|^{\alpha} \Gamma^{(l-1)}(\dd s) \Bigg)^{\beta}\Big| h_C^{(l-2)} \Bigg]\\
& \leq 2^{ \alpha}|\textbf{G}^{(l)}| |\textbf{P}^{(l)}| K a^{\alpha} +\\
& \quad + 2^{ \alpha}b^{ \alpha} \mathcal{M} \sum_{p^{(l-1)} \in [\textbf{P}^{(l-1)}]} \sum_{k \in [K]} \Bigg(\E\Bigg[ \int_{\mathbb{S}^{|\textbf{P}^{(l-1)}\times K|-1}} \Big|  U(p^{(l-1)},k) \otimes s \Big|^{\alpha} \Gamma^{(l-1)}(\dd s)\Big| h_C^{(l-2)} \Bigg] \Bigg)^{\beta }
\end{split} 
\end{equation}
where we used the Jensen's inequality. Moreover,

\begin{equation}\label{equazioneL1B}
\begin{split}
&\E\Bigg[ \int_{\mathbb{S}^{|\textbf{P}^{(l-1)}\times K|-1}} \Big|  U(p^{(l-1)},k) \otimes s \Big|^{\alpha} \Gamma^{(l-1)}(\dd s)\Bigg| h_C^{(l-2)} \Bigg]\\
&=\E\Bigg[ \sigma_b^{\alpha}|U(p^{(l-1)},k) \otimes \1_{(\textbf{P}^{(l-1)} \times K)}|^{\alpha} + \\
&\qquad +\frac{\sigma_{\omega}^{\alpha}}{C}\sum_{(c^{(l-2)},g^{(l-1)}) \in [C \times \textbf{G}^{(l-1)}]} \Big|U(p^{(l-1)},k) \otimes \phi(f^{(l-2)(1:K)}_{\patch})_{(c^{(l-2)},g^{(l-1)})} \Big|^{\alpha}\Bigg| h_C^{(l-2)}  \Bigg]\\
&=\E\Bigg[ \sigma_b^{\alpha} + \frac{\sigma_{\omega}^{\alpha}}{C}\sum_{(c^{(l-2)},g^{(l-1)}) \in [C \times \textbf{G}^{(l-1)}]} | \phi(f^{(l-2)(k)}_{\patch p^{(l-1)}})_{(c^{(l-2)},g^{(l-1)})} |^{\alpha}\Bigg| h_C^{(l-2)}  \Bigg]\\
&=\sigma_b^{\alpha} + \frac{\sigma_{\omega}^{\alpha}}{C}\sum_{(c^{(l-2)},g^{(l-1)}) \in [ C \times \textbf{G}^{(l-1)}]} \E\Big[| \phi(f^{(l-2)(k)}_{\patch p^{(l-1)}})_{(c^{(l-2)},g^{(l-1)})} |^{\alpha} \Big| h_C^{(l-2)}  \Big]\\
&=\sigma_b^{\alpha} + \frac{\sigma_{\omega}^{\alpha}}{C}\sum_{(c^{(l-2)},g^{(l-1)}) \in [C \times \textbf{G}^{(l-1)}]} \E\Big[\| \phi(f^{(l-2)(1:K)}_{\patch })_{(c^{(l-2)},g^{(l-1)})} \|^{\alpha} \Big| h_C^{(l-2)}  \Big]\\
&=\sigma_b^{\alpha} + \frac{\sigma_{\omega}^{\alpha}}{C}\sum_{(c^{(l-2)},g^{(l-1)}) \in [C \times \textbf{G}^{(l-1)}]} \int \| \phi(f_{g^{(l-1)}})\|^{\alpha} h_C^{(l-2)}(\dd f_{g^{(l-1)} \in [\textbf{G}^{(l-1)}]} ) \\
&=\sigma_b^{\alpha} + \sigma_{\omega}^{\alpha} \int \sum_{g^{(l-1)} \in [\textbf{G}^{(l-1)}]} \| \phi(f_{g^{(l-1)}})\|^{\alpha} h_C^{(l-2)}(\dd f_{g^{(l-1)} \in [\textbf{G}^{(l-1)}]} )
\end{split}
\end{equation}
Note that we have used the inequality $|x_i| \leq \|x\|$ and that $(f^{(l-2)(1:K)}_{\patch})_{(c^{(l-2)},:)}|h^{(l-2)}_C \overset{iid}{\sim} h^{(l-2)}_C$ (with respect to $c^{(l-1)} \geq 1$). Putting together (\ref{equationL1A}) and (\ref{equazioneL1B}) we have shown that

\[
\begin{split}
&\sup_C \E \Big[ \sum_{g^{(l)} \in [\textbf{G}^{(l)}]} \| \phi(f^{(l-1)(1:K)}_{\patch})_{(c^{(l-1)},g^{(l)})}\|^{\alpha} \Big| h_C^{(l-2)}\Big]\\
&\leq 2^{\alpha}|\textbf{G}^{(l)}| |\textbf{P}^{(l)}| K a^{ \alpha} + 2^{ \alpha}b^{ \alpha} \mathcal{M} \sum_{p^{(l-1)} \in [\textbf{P}^{(l-1)}]} \sum_{k \in [K]} \Bigg( \sigma_b^{\alpha} \\
&\quad + \sigma_{\omega}^{\alpha} \sup_C \int \sum_{g^{(l-1)} \in [\textbf{G}^{(l-1)}]} \| \phi(f_{g^{(l-1)}})\|^{\alpha} h_C^{(l-2)}(\dd f_{g^{(l-1)} \in [\textbf{G}^{(l-1)}]} ) \Bigg)^{\beta }
\end{split}
\]
which is finite by induction hypothesis. Now we conclude:

\[
\begin{split}
&\sup_C \int \sum_{g^{(l)} \in [\textbf{G}^{(l)}]} \|\phi(f_{g^{(l)}})\|^{\alpha} h_C^{(l-1)}(\dd f_{g^{(l)} \in [\textbf{G}^{(l)}]}) \\
&= \sup_C\E \Big[ \sum_{g^{(l)} \in [\textbf{G}^{(l)}]} \|\phi(f^{(l-1)(1:K)}_{\patch})_{(c^{(l-1)},g^{(l)})}\|^{\alpha} \Big|h^{(l-1)}_C \Big]\\
&= \sup_C \E \Big[ \sup_C\E \Big[ \sum_{g^{(l)} \in [\textbf{G}^{(l)}]} \|\phi(f^{(l-1)(1:K)}_{\patch})_{(c^{(l-1)},g^{(l)})}\|^{\alpha} \Big|h^{(l-2)}_C \Big]\Big|h^{(l-1)}_C \Big]\\
&< +\infty
\end{split}
\]
that is finite by previous step.

\subsection*{Proof of L1.1}
The proof of L1.1) follows by induction, and along lines similar to the proof of L1). In particular, let $\epsilon>0$ be such that $\beta (\alpha + \epsilon)<\alpha$. It exists since $\beta <1$. For $l=2$, for each $c^{(1)} \geq 1$
\[
\begin{split}
& \E \Big[ \sum_{g^{(2)} \in [\textbf{G}^{(2)}]} \| \phi(f^{(1)(1:K)}_{\patch})_{(c^{(1)},g^{(2)})}\|^{\alpha+\epsilon} \Big] \\
& = \E \Big[ \sum_{g^{(2)} \in [\textbf{G}^{(2)}]} \sum_{p^{(2)} \in [\textbf{P}^{(2)}]} \sum_{k \in [K]} | \phi(f^{(1)(k)}_{\patch p^{(2)}})_{(c^{(1)},g^{(2)})}|^{\alpha+\epsilon} \Big] \\
 & = \sum_{g^{(2)} \in [\textbf{G}^{(2)}]} \sum_{p^{(2)} \in [\textbf{P}^{(2)}]} \sum_{k \in [K]} \E \Big[ | \phi(f^{(1)(k)}_{\patch p^{(2)}})_{(c^{(1)},g^{(2)})}|^{\alpha+\epsilon} \Big] \\
 & \leq \sum_{g^{(2)} \in [\textbf{G}^{(2)}]} \sum_{p^{(2)} \in [\textbf{P}^{(2)}]} \sum_{k \in [K]} \E \Big[ ( a + b|(f^{(1)(k)}_{\patch p^{(2)}})_{(c^{(1)},g^{(2)})}|^{\beta})^{ (\alpha+\epsilon)} \Big] \\
 & \leq \sum_{g^{(2)} \in [\textbf{G}^{(2)}]} \sum_{p^{(2)} \in [\textbf{P}^{(2)}]} \sum_{k \in [K]} \E \Big[  a^{ (\alpha+\epsilon)} + b^{ (\alpha+\epsilon)}|(f^{(1)(k)}_{\patch p^{(2)}})_{(c^{(1)},g^{(2)})}|^{ (\alpha+\epsilon) \beta} \Big] \\
 & \leq 2^{ (\alpha+\epsilon)}|\textbf{G}^{(2)}| |\textbf{P}^{(2)}| K a^{ (\alpha+\epsilon)} +(2b)^{ (\alpha+\epsilon)} \sum_{g^{(2)} \in [\textbf{G}^{(2)}]} \sum_{p^{(2)} \in [\textbf{P}^{(2)}]} \sum_{k \in [K]}\E \Big[ |(f^{(1)(k)}_{\patch p^{(2)}})_{(c^{(1)},g^{(2)})}|^{ (\alpha+\epsilon) \beta} \Big] \\
 & < \infty
\end{split}
\]

where we used that $(f^{(1)(k)}_{\patch p^{(2)}})_{(c^{(1)},g^{(2)})}$, by Proposition \ref{prop1} is distributed according to a SD with index $\alpha$ (and some skewness, scale and shift parameters) and then, being $ (\alpha+\epsilon) \beta < \alpha$, $\E \Big[ |(f^{(1)(k)}_{\patch p^{(2)}})_{(c^{(1)},g^{(2)})}|^{ (\alpha+\epsilon) \beta} \Big]< +\infty$. Moreover the bound is uniform with respect to $C$ since the law is invariant with respect to $C$. Now assuming that L1.1) is true for $l-2$ we prove that it is true for $l-1$. As in the previous lemma, we can write the following inequality
\begin{equation}\label{eq:unif_integ}
\begin{split}
&\E \Big[ \sum_{g^{(l)} \in [\textbf{G}^{(l)}]} \| \phi(f^{(l-1)(1:K)}_{\patch})_{(c^{(l-1)},g^{(l)})}\|^{\alpha+\epsilon} \Big| h_C^{(l-2)}\Big] \\
&= \E \Big[\E \Big[ \sum_{g^{(l)} \in [\textbf{G}^{(l)}]} \| \phi(f^{(l-1)(1:K)}_{\patch})_{(c^{(l-1)},g^{(l)})}\|^{\alpha+\epsilon} \Big| f^{(l-2)(1:K)}_{(1:C,:)}\Big]\Big| h_C^{(l-2)}\Big] \\
& \leq 2^{(\alpha+\epsilon)}|\textbf{G}^{(l)}| |\textbf{P}^{(l)}| K a^{ (\alpha+\epsilon)} +2^{(\alpha+\epsilon)}b^{(\alpha+\epsilon)} \mathcal{M} \sum_{p^{(l-1)} \in [\textbf{P}^{(l-1)}]} \\
& \quad \sum_{k \in [K]} \E\Bigg[ \Bigg( \int_{\mathbb{S}^{|\textbf{P}^{(l-1)}\times K|-1}} \Big|  U(p^{(l-1)},k) \otimes s \Big|^{\alpha+\epsilon} \Gamma^{(l-1)}(\dd s) \Bigg)^{\beta}\Big| h_C^{(l-2)} \Bigg]\\
& \leq 2^{(\alpha+\epsilon)}|\textbf{G}^{(l)}| |\textbf{P}^{(l)}| K a^{(\alpha+\epsilon)} + 2^{(\alpha+\epsilon)} b^{ (\alpha+\epsilon)} \mathcal{M} \sum_{p^{(l-1)} \in [\textbf{P}^{(l-1)}]} \\
& \quad \sum_{k \in [K]} \Bigg(\E\Bigg[ \int_{\mathbb{S}^{|\textbf{P}^{(l-1)}\times K|-1}} \Big|  U(p^{(l-1)},k) \otimes s \Big|^{\alpha+\epsilon} \Gamma^{(l-1)}(\dd s)\Big| h_C^{(l-2)} \Bigg] \Bigg)^{\beta }
\end{split} 
\end{equation}
Moreover, following the same steps as in the previous lemma (just replacing $\alpha+\epsilon$ instead of $\alpha$), we get
\[
\begin{split}
& \E\Bigg[ \int_{\mathbb{S}^{|\textbf{P}^{(l-1)}\times K|-1}} \Big|  U(p^{(l-1)},k) \otimes s \Big|^{\alpha+\epsilon} \Gamma^{(l-1)}(\dd s)\Bigg| h_C^{(l-2)} \Bigg]\\
&=\sigma_b^{\alpha+\epsilon} + \sigma_{\omega}^{\alpha+\epsilon} \int \sum_{g^{(l-1)} \in [\textbf{G}^{(l-1)}]} \| \phi(f_{g^{(l-1)}})\|^{\alpha+\epsilon} h_C^{(l-2)}(\dd f_{g^{(l-1)} \in [\textbf{G}^{(l-1)}]} )
\end{split}
\]
and
\[
\begin{split}
&\int \sum_{g^{(l-1)} \in [\textbf{G}^{(l-1)}]} \| \phi(f_{g^{(l-1)}})\|^{\alpha+\epsilon} h_C^{(l-2)}(\dd f_{g^{(l-1)} \in [\textbf{G}^{(l-1)}]} )\\
&= \E\Big[ \sum_{g^{(l-1)} \in [\textbf{G}^{(l-1)}]} \| \phi(f^{(l-2)(1:K)}_{\patch})_{(c^{(l-2)},g^{(l-1)})}\|^{\alpha+\epsilon} \Big| h_C^{(l-2)} \Big]\\
&=\E\Big[ \E\Big[ \sum_{g^{(l-1)} \in [\textbf{G}^{(l-1)}]} \| \phi(f^{(l-2)(1:K)}_{\patch})_{(c^{(l-2)},g^{(l-1)})}\|^{\alpha+\epsilon} \Big| h_C^{(l-3)} \Big] \Big| h_C^{(l-2)} \Big]\\
&\leq\E\Big[ \E\Big[ \sum_{g^{(l-1)} \in [\textbf{G}^{(l-1)}]} \| \phi(f^{(l-2)(1:K)}_{\patch})_{(c^{(l-2)},g^{(l-1)})}\|^{\alpha+\epsilon} \Big| h_C^{(l-3)} \Big] \Big| h_C^{(l-2)} \Big]
\end{split}
\]

Thus taking the $\sup_C$ in (\ref{eq:unif_integ}), by previous inequalities, it is less or equal than

\[
\begin{split}
&2^{(\alpha+\epsilon)}|\textbf{G}^{(l)}| |\textbf{P}^{(l)}| K a^{ (\alpha+\epsilon)} + 2^{(\alpha+\epsilon)}b^{ (\alpha+\epsilon)} \mathcal{M} \sum_{p^{(l-1)} \in [\textbf{P}^{(l-1)}]} \sum_{k \in [K]} \Bigg( \sigma_b^{\alpha+\epsilon} +\\
&+ \sigma_{\omega}^{\alpha+\epsilon} \E\Big[ \sup_C \E\Big[ \sum_{g^{(l-1)} \in [\textbf{G}^{(l-1)}]} \| \phi(f^{(l-2)(1:K)}_{\patch})_{(c^{(l-2)},g^{(l-1)})}\|^{\alpha+\epsilon} \Big| h_C^{(l-3)} \Big] \Big| h_C^{(l-2)} \Big] \Bigg)^{\beta }
\end{split}
\]

which is bounded by hypothesis induction.

\subsection*{Proof of L2}
By induction hypothesis, $h_C^{(l-1)}$ converges to $h^{(l-1)}$ in distribution with respect to the weak topology. Since the limit law is degenerate on $h^{(l-1)}$ (in the sense that it provides a.s.  the distribution $q^{(l-1)}$), then for every sub-sequence $(C')$ there exists a sub-sequence $(C'')$ such that $h_{C''}^{(l-1)}$ converges a.s. By the induction hypothesis, $h^{(l-1)}$ is absolutely continuous with respect to the Lebesgue measure. Since $\phi$ is almost everywhere continuous, and by L1.1) uniformly integrable with respect to $(h_C^{(l-1)})$ then we can write the following

\[\int \sum_{g^{(l)} \in [\textbf{G}^{(l)}]} |\textbf{t}^{(l)} \otimes \phi(f_{g^{(l)}})|^{\alpha} h^{(l-1)}_{C''}(\dd f_{g^{(l)} \in [\textbf{G}^{(l)}]}) \overset{a.s.}{\rightarrow} \int \sum_{g^{(l)} \in [\textbf{G}^{(l)}]} |\textbf{t}^{(l)} \otimes \phi(f_{g^{(l)}})|^{\alpha} q^{(l-1)}(\dd f_{g^{(l)} \in [\textbf{G}^{(l)}]}) \] 
Thus
\[\int \sum_{g^{(l)} \in [\textbf{G}^{(l)}]} |\textbf{t}^{(l)} \otimes \phi(f_{g^{(l)}})|^{\alpha} h^{(l-1)}_C(\dd f_{g^{(l)} \in [\textbf{G}^{(l)}]}) \overset{p}{\rightarrow} \int \sum_{g^{(l)} \in [\textbf{G}^{(l)}]} |\textbf{t}^{(l)} \otimes \phi(f_{g^{(l)}})|^{\alpha} q^{(l-1)}(\dd f_{g^{(l)} \in [\textbf{G}^{(l)}]})\] 
as $C \rightarrow +\infty$.

\subsection*{Proof of L3}
Let $\epsilon>0$ as in L1.1), $r=\frac{\alpha+\epsilon}{\alpha}$ and $q$ such that $\frac{1}{r}+\frac{1}{q}=1$. Thus, by H\"older inequality and by the fact that, being $q>1$, for every $y \geq 0$ it holds that $(1-e^{-y})^q \leq (1-e^{-y}) \leq y$, we get

\[
\begin{split}
&\int \sum_{g^{(l)} \in [\textbf{G}^{(l)}]} \|\phi(f_{g^{(l)}})\|^{\alpha} [1-\exp\{-\frac{\sigma^{\alpha}_{\omega}}{C} \sum_{g^{(l)} \in [\textbf{G}^{(l)}]}| \textbf{t}^{(l)} \otimes \phi (f_{g^{(l)}}) |^{\alpha}\}] h^{(l-1)}_C(\dd f_{g^{(l)} \in [\textbf{G}^{(l)}]})\\
&\leq \sum_{g^{(l)} \in [\textbf{G}^{(l)}]} \Bigg[ \int \|\phi(f_{g^{(l)}})\|^{\alpha r} h^{(l-1)}_C(\dd f_{g^{(l)}}) \Bigg]^{\frac{1}{r}} \times \\
& \quad \times \Bigg[ \int [1-\exp\{-\frac{\sigma^{\alpha}_{\omega}}{C} \sum_{g^{(l)} \in [\textbf{G}^{(l)}]}| \textbf{t}^{(l)} \otimes \phi (f_{g^{(l)}}) |^{\alpha}\}]^q h^{(l-1)}_C(\dd f_{g^{(l)}})
\Bigg]^{\frac{1}{q}}\\
&\leq \sum_{g^{(l)} \in [\textbf{G}^{(l)}]} \Bigg[ \int \|\phi(f_{g^{(l)}})\|^{\alpha r} h^{(l-1)}_C(\dd f_{g^{(l)}}) \Bigg]^{\frac{1}{r}} \Bigg[ \frac{\sigma^{\alpha}_{\omega}}{C} \int \sum_{g^{(l)} \in [\textbf{G}^{(l)}]} | \textbf{t}^{(l)} \otimes \phi (f_{g^{(l)}}) |^{\alpha} h^{(l-1)}_C(\dd f_{g^{(l)}})
\Bigg]^{\frac{1}{q}}\\
&\leq \sum_{g^{(l)} \in [\textbf{G}^{(l)}]} \Bigg[ \int \|\phi(f_{g^{(l)}})\|^{\alpha r} h^{(l-1)}_C(\dd f_{g^{(l)}}) \Bigg]^{\frac{1}{r}} \times \\
& \quad \times \Bigg[ \| \textbf{t}^{(l)} \|^{\alpha} \frac{\sigma^{\alpha}_{\omega}}{C} \int \sum_{g^{(l)} \in [\textbf{G}^{(l)}]} \| \phi (f_{g^{(l)}}) \|^{\alpha} h^{(l-1)}_C(\dd f_{g^{(l)}})
\Bigg]^{\frac{1}{q}}\\
&=\Big( \| \textbf{t}^{(l)} \|^{\alpha} \frac{\sigma^{\alpha}_{\omega}}{C}\Big)^{\frac{1}{q}}\sum_{g^{(l)} \in [\textbf{G}^{(l)}]} \Bigg[ \int \|\phi(f_{g^{(l)}})\|^{\alpha +\epsilon} h^{(l-1)}_C(\dd f_{g^{(l)}}) \Bigg]^{\frac{1}{r}} \times \\
& \quad \times \Bigg[ \int \sum_{g^{(l)} \in [\textbf{G}^{(l)}]} \| \phi (f_{g^{(l)}}) \|^{\alpha} h^{(l-1)}_C(\dd f_{g^{(l)}})
\Bigg]^{\frac{1}{q}}\overset{p}{\rightarrow} 0
\end{split}
\]
as $C \rightarrow \infty$ by L1) and L1.1).

\subsection*{Conclusion: L1 + L1.1 + L2 + L3}
By Lagrange theorem for $y>0$ there exists $\theta \in (0,1)$ such that $e^{-y}=1-y+y(1-e^{-y\theta})$. In our case, for $y=y_C(f_{g^{(l)}\in [\textbf{G}^{(l)}]})=\frac{\sigma^{\alpha}_{\omega}}{C} \sum_{g^{(l)} \in [\textbf{G}^{(l)}]}| \textbf{t}^{(l)} \otimes \phi (f_{g^{(l)}}) |^{\alpha}$, for any $C$ there exists a $\theta_C \in (0,1)$ such that, from (\ref{eq:conditioned}), the follow equality holds

\[
\begin{split}
&\varphi_{\big( f^{(l)(1:K)}_{(c^{(l)},:)}\big)} (\textbf{t}^{(l)})\\
=& e^{ -\sigma^{\alpha}_b|\textbf{t}^{(l)} \otimes  \1_{(\textbf{P}^{(l)} \times K)}|^{\alpha} } \E \Big[ \Big( \int e^ { - y_C\big(f_{g^{(l)}\in [\textbf{G}^{(l)}]}\big)} h^{(l-1)}_C(\dd f_{\{g^{(l)}\in [\textbf{G}^{(l)}]\}}) \Big)^{C} \Big]\\
=&e^{ -\sigma^{\alpha}_b|\textbf{t}^{(l)} \otimes  \1_{(\textbf{P}^{(l)} \times K)}|^{\alpha} } \E \Big[ \Big( 1- \int y_C\big(f_{g^{(l)}\in [\textbf{G}^{(l)}]}\big) h^{(l-1)}_C(\dd f_{\{g^{(l)}\in [\textbf{G}^{(l)}]\}})+\\
& \quad +\int y_C\big(f_{g^{(l)}\in [\textbf{G}^{(l)}]}\big)\Big[1-e^ { - \theta_C y_C\big(f_{g^{(l)}\in [\textbf{G}^{(l)}]}\big)}\Big] h^{(l-1)}_C(\dd f_{\{g^{(l)}\in [\textbf{G}^{(l)}]\}}) \Big)^{C} \Big]\\
=& \exp \Big\{ -\sigma^{\alpha}_b|\textbf{t}^{(l)} \otimes  \1_{(\textbf{P}^{(l)} \times K)}|^{\alpha} \Big\} \times \\
\quad & \times \E \Bigg[  \Bigg( 1- \frac{\sigma^{\alpha}_{\omega}}{C} \int \sum_{g^{(l)} \in [\textbf{G}^{(l)}]}| \textbf{t}^{(l)} \otimes \phi (f_{g^{(l)}}) |^{\alpha} h^{(l-1)}_C(\dd f_{\{g^{(l)}\in [\textbf{G}^{(l)}]\}}) + \\
\quad & + \frac{\sigma^{\alpha}_{\omega}}{C} \int \sum_{g^{(l)} \in [\textbf{G}^{(l)}]}| \textbf{t}^{(l)} \otimes \phi (f_{g^{(l)}}) |^{\alpha} \Big[1-\exp \Big\{ -\theta_C \frac{\sigma^{\alpha}_{\omega}}{C} \sum_{g^{(l)} \in [\textbf{G}^{(l)}]}| \textbf{t}^{(l)} \otimes \phi (f_{g^{(l)}}) |^{\alpha} \Big\}\Big] \times \\
\quad & \times h^{(l-1)}_C(\dd f_{\{g^{(l)}\in [\textbf{G}^{(l)}]\}}) \Bigg)^{C} \Bigg]
\end{split}
\]

The last integral tends to $0$ as $C \rightarrow \infty$ since by Cauchy inequality $| \textbf{t}^{(l)} \otimes \phi (f_{g^{(l)}}) |^{\alpha} \leq \|\textbf{t}^{(l)}\|^{\alpha} \|\phi (f_{g^{(l)}}) \|^{\alpha}$ we have

\[
\begin{split}
&\int \sum_{g^{(l)} \in [\textbf{G}^{(l)}]}| \textbf{t}^{(l)} \otimes \phi (f_{g^{(l)}}) |^{\alpha} \Big[1-\exp \Big\{ -\theta_C \frac{\sigma^{\alpha}_{\omega}}{C} \sum_{g^{(l)} \in [\textbf{G}^{(l)}]}| \textbf{t}^{(l)} \otimes \phi (f_{g^{(l)}}) |^{\alpha} \Big\}\Big] \times \\
\quad & \times h^{(l-1)}_C(\dd f_{\{g^{(l)}\in [\textbf{G}^{(l)}]\}})\\
&\leq \|\textbf{t}^{(l)}\|^{\alpha} \int \sum_{g^{(l)} \in [\textbf{G}^{(l)}]} \|\phi (f_{g^{(l)}}) \|^{\alpha} \Big[1-\exp \Big\{ -\theta_C \frac{\sigma^{\alpha}_{\omega}}{C} \sum_{g^{(l)} \in [\textbf{G}^{(l)}]}| \textbf{t}^{(l)} \otimes \phi (f_{g^{(l)}}) |^{\alpha} \Big\}\Big] \times \\
\quad & \times h^{(l-1)}_C(\dd f_{\{g^{(l)}\in [\textbf{G}^{(l)}]\}})\\
&\leq \|\textbf{t}^{(l)}\|^{\alpha} \int \sum_{g^{(l)} \in [\textbf{G}^{(l)}]} \|\phi (f_{g^{(l)}}) \|^{\alpha} \Big[1-\exp \Big\{ - \frac{\sigma^{\alpha}_{\omega}}{C} \sum_{g^{(l)} \in [\textbf{G}^{(l)}]}| \textbf{t}^{(l)} \otimes \phi (f_{g^{(l)}}) |^{\alpha} \Big\}\Big] \times \\
\quad & \times h^{(l-1)}_C(\dd f_{\{g^{(l)}\in [\textbf{G}^{(l)}]\}})\\
\end{split}
\]

which tends to 0 in probability by L3). Thus, by using the definition of the exponential function $e^x=\lim_{n \rightarrow \infty} (1+x/n)^n$, and by L2), we get

\[
\begin{split}
&\varphi_{\big(f^{(l)(1:K)}_{(c^{(l)},:)}\big)} (\textbf{t}^{(l)}) \\
&\rightarrow \exp \Big\{ -\sigma^{\alpha}_b|\textbf{t}^{(l)} \otimes  \1_{(\textbf{P}^{(l)} \times K)}|^{\alpha} -\sigma^{\alpha}_{\omega} \int \sum_{g^{(l)} \in [\textbf{G}^{(l)}]}| \textbf{t}^{(l)} \otimes \phi (f_{g^{(l)}}) |^{\alpha} q^{(l-1)}(\dd f_{\{g^{(l)}\in [\textbf{G}^{(l)}]\}}) \Big\}\\
&= \exp \Big\{ -\sigma^{\alpha}_b \|\1_{(\textbf{P}^{(l)} \times K)} \|^{\alpha} \Big|\textbf{t}^{(l)} \otimes \frac{ \1_{(\textbf{P}^{(l)} \times K)} }{\| \1_{(\textbf{P}^{(l)} \times K)}\|}\Big|^{\alpha} + \\
& \quad -\sigma^{\alpha}_{\omega} \int \sum_{g^{(l)} \in [\textbf{G}^{(l)}]} \| \phi (f_{g^{(l)}}) \|^{\alpha} \Big| \textbf{t}^{(l)} \otimes \frac{ \phi (f_{g^{(l)}})} {\| \phi (f_{g^{(l)}}) \|} \Big|^{\alpha} q^{(l-1)}(\dd f_{\{g^{(l)}\in [\textbf{G}^{(l)}]\}}) \Big\}\\
&= \exp \Big\{ - \int_{\mathbb{S}^{|\textbf{P}^{(l)}\times K|-1}} | \textbf{t}^{(l)} \otimes s^{(l)}|^{\alpha} \Gamma^{(l)}_{\infty} (\dd s^{(l)}) \Big\}
\end{split}
\]
where

\[
\begin{split}
    \Gamma^{(l)}_{\infty} = & \| \sigma_b \1_{(\textbf{P}^{(l)} \times K)} \|^{\alpha} \Psi^{(l)} \Big( \1_{(\textbf{P}^{(l)} \times K)} \Big)+ \\
    &+ \int \sum_{g^{(l)} \in [\textbf{G}^{(l)}]} \|\sigma_{\omega} \phi (f_{g^{(l)}})\|^{\alpha} \Psi^{(l)} \Big( \phi (f_{g^{(l)}}) \Big) q^{(l-1)}(\dd f_{\{g^{(l)}\in [\textbf{G}^{(l)}]\}})
\end{split}
\]
\end{proof}

\section*{SM A.1}\zlabel{sec:appA.1}

We prove that $f^{(1)(1:K)}_{(c^{(1)},:)} \sim \text{St}_{\textbf{P}^{(1)} \times K}(\alpha,\Gamma^{(1)})$, where 
\[
\begin{split}
    \Gamma^{(1)}=& \| \sigma_b \1_{(\textbf{P}^{(1)} \times K)} \|^{\alpha} \Psi^{(1)} ( \1_{(\textbf{P}^{(1)} \times K)} )\\
    &+ \sum_{(c^{(0)},g^{(1)}) \in [C^{(0)} \times \textbf{G}^{(1)}]} \|\sigma_{\omega} (x^{(1:K)}_{\patch})_{(c^{(0)},g^{(1)})}\|^{\alpha} \Psi^{(1)} \Big(  (x^{(1:K)}_{\patch})_{(c^{(0)},g^{(1)})}\Big)
\end{split}
\]

\begin{proof}
For $l=1$, from definition (\ref{sistem:fixedchannel}) and assumption (\ref{eq:param_dist2}) we have that, for any $\textbf{t}^{(1)}:=[t^{(1)(k)}_{p^{(1)}}]_{\{(p^{(1)},k) \in [\textbf{P}^{(1)} \times K] \}} \in \R^{\textbf{P}^{(1)} \times K}$
\[
\begin{split}
& \varphi_{(f^{(1)(1:K)}_{(c^{(1)},:)})} (\textbf{t}^{(1)})\\
& = \E \Big[ \exp \Big\{ \ii \textbf{t}^{(1)} \otimes   f^{(1)(1:K)}_{(c^{(1)},:)} \Big\} \Big] \\
& = \E \Big[ \exp \Big\{ \ii \textbf{t}^{(1)} \otimes \Big( W^{(1)}_{(c^{(1)},:,:)} \overset{(\textbf{P}^{(1)},K)}{\otimes} x^{(1:K)}_{\patch } + b^{(1)}_{c^{(1)}} \1 _{( \textbf{P}^{(1)} \times K)} \Big) \Big\} \Big] \\
& = \E \Big[ \exp \Big\{ \ii  W^{(1)}_{(c^{(1)},:,:)} \overset{(\textbf{P}^{(1)},K)}{\otimes} \Big(  \textbf{t}^{(1)} \overset{(C^{(0)}, \textbf{G}^{(1)})}{\otimes} x^{(1:K)}_{\patch }\Big) + b^{(1)}_{c^{(1)}} \textbf{t}^{(1)} \otimes \1 _{( \textbf{P}^{(1)} \times K)} \Big\} \Big] \\
& = \E \Big[\exp \Big\{ b^{(1)}_{c^{(1)}} \textbf{t}^{(1)} \otimes \1_{(\textbf{P}^{(1)} \times K)} \Big\} \Big] \times \\
& \quad \times \prod_{(c^{(0)},g^{(1)}) \in [C^{(0)} \times \textbf{G}^{(1)}]}\E \Big[ \exp \Big\{ \ii  W^{(1)}_{(c^{(1)},c^{(0)},g^{(1)})} \textbf{t}^{(1)} \otimes (x^{(1:K)}_{\patch})_{(c^{(0)},g^{(1)})}\Big) \Big\} \Big] \\
&= e^{-\sigma_b^{\alpha}|\textbf{t}^{(1)} \otimes \1_{(\textbf{P}^{(1)} \times K)}|^{\alpha}} \prod_{(c^{(0)},g^{(1)}) \in [C^{(0)} \times \textbf{G}^{(1)}]} e^{-\sigma_{\omega}^{\alpha}\Big|\textbf{t}^{(1)} \otimes (x^{(1:K)}_{\patch})_{(c^{(0)},g^{(1)})} \Big|^{\alpha}} \\
&= \exp \Big\{-\sigma_b^{\alpha}|\textbf{t}^{(1)} \otimes \1_{(\textbf{P}^{(1)} \times K)}|^{\alpha} -\sigma_{\omega}^{\alpha} \sum_{(c^{(0)},g^{(1)}) \in [C^{(0)} \times \textbf{G}^{(1)}]} \Big|\textbf{t}^{(1)} \otimes (x^{(1:K)}_{\patch})_{(c^{(0)},g^{(1)})} \Big|^{\alpha} \Big\} \\
&= \exp \Big\{-\sigma_b^{\alpha} \| \1_{(\textbf{P}^{(1)} \times K)} \|^{\alpha} \Big|\textbf{t}^{(1)} \otimes \frac{\1_{(\textbf{P}^{(1)} \times K)}}{\|\1_{(\textbf{P}^{(1)} \times K)}\|}\Big|^{\alpha} +\\
& \qquad -\sigma_{\omega}^{\alpha} \sum_{(c^{(0)},g^{(1)}) \in [C^{(0)} \times \textbf{G}^{(1)}]} \| (x^{(1:K)}_{\patch})_{(c^{(0)},g^{(1)})}\|^{\alpha} \Big|\textbf{t}^{(1)} \otimes \frac{ (x^{(1:K)}_{\patch})_{(c^{(0)},g^{(1)})}}{\| (x^{(1:K)}_{\patch})_{(c^{(0)},g^{(1)})}\|} \Big|^{\alpha} \Big\} \\
&= \exp \Big\{ - \int_{\mathbb{S}^{|\textbf{P}^{(1)}\times K|-1}} | \textbf{t}^{(1)} \otimes s^{(1)}|^{\alpha} \Gamma^{(1)} (\dd s^{(1)}) \Big\}
\end{split}
\]

where $(x^{(1:K)}_{\patch})_{(c^{(0)},g^{(1)})}=\Big[ (x^{(k)}_{\patch p^{(1)}})_{(c^{(0)},g^{(1)})} \Big]_{\{(p^{(1)},k)\in [\textbf{P}^{(1)} \times K]\}}$ and 

\[
\begin{split}
    \Gamma^{(1)}=& \| \sigma_b \1_{(\textbf{P}^{(1)} \times K)} \|^{\alpha} \Psi^{(1)} ( \1_{(\textbf{P}^{(1)} \times K)} )\\
    &+ \sum_{(c^{(0)},g^{(1)}) \in [C^{(0)} \times \textbf{G}^{(1)}]} \|\sigma_{\omega} (x^{(1:K)}_{\patch})_{(c^{(0)},g^{(1)})}\|^{\alpha} \Psi^{(1)} \Big(  (x^{(1:K)}_{\patch})_{(c^{(0)},g^{(1)})}\Big)
\end{split}
\]


\end{proof}

\section*{SM A.2}\zlabel{sec:appA.2}

We prove that for each $l=2, \dots ,L$,  $f^{(l)(1:K)}_{(c^{(l)},:)}|f^{(l-1)(1:K)}_{(1:C,:)} \sim \text{St}_{\textbf{P}^{(l)} \times K}(\alpha,\Gamma^{(l)})$, where
\[
\begin{split}
    \Gamma^{(l)}_C=& \| \sigma_b \1_{(\textbf{P}^{(l)} \times K)} \|^{\alpha} \Psi^{(l)} \Big( \1_{(\textbf{P}^{(l)} \times K)} \Big)+ \\
    &+ \frac{1}{C}\sum_{(c^{(l-1)},g^{(l)}) \in [C \times \textbf{G}^{(l)}]} \|\sigma_{\omega} \phi (f^{(l-1)(1:K)}_{\patch})_{(c^{(l-1)},g^{(l)})}\|^{\alpha} \Psi^{(l)} \Big( \phi (f^{(l-1)(1:K)}_{\patch})_{(c^{(l-1)},g^{(l)})} \Big)
\end{split}
\]

\begin{proof}
For $l \geq 2$, from definition (\ref{sistem:fixedchannel}) and assumption (\ref{eq:param_dist2}) we have that, for any $\textbf{t}^{(l)}:=[t^{(l)(k)}_{p^{(l)}}]_{\{(p^{(l)},k) \in [\textbf{P}^{(l)} \times K] \}} \in \R^{\textbf{P}^{(l)} \times K}$, it holds

\[
\begin{split}
& \varphi_{\big( f^{(l)(1:K)}_{(c^{(l)},:)}|f^{(l-1)(1:K)}_{(1:C,:)} \big)} (\textbf{t}^{(l)}) \\
& = \E \Big[ \exp \Big\{ \ii \textbf{t}^{(l)} \otimes f^{(l)(1:K)}_{(c^{(l)},:)}|f^{(l-1)(1:K)}_{(1:C,:)} \Big\} \Big] \\
& = \E \Big[ \exp \Big\{ \ii  \textbf{t}^{(l)} \otimes \Big( \frac{1}{C^{1/\alpha}}W^{(l)}_{(c^{(l)},:,:)} \overset{(\textbf{P}^{(l)},K)}{\otimes}  \phi ( f^{(l-1)(1:K)}_{\patch}) + b^{(l)}_{c^{(l)}}  \1 _{( \textbf{P}^{(l)} \times K)} \Big) \Big\} \Big] \\
& = \E \Big[ \exp \Big\{ \ii \frac{1}{C^{1/\alpha}} W^{(l)}_{(c^{(l)},:,:)} \overset{(\textbf{P}^{(l)},K)}{\otimes} \Big( \textbf{t}^{(l)} \overset{( C,\textbf{G}^{(l)})}{\otimes}  \phi ( f^{(l-1)(1:K)}_{\patch})\Big) + \Big( b^{(l)}_{c^{(l)}}  \textbf{t}^{(l)} \otimes \1 _{( \textbf{P}^{(l)} \times K)} \Big) \Big\} \Big] \\
& = \E \Big[\exp \Big\{ b^{(l)}_{c^{(l)}} \textbf{t}^{(l)} \otimes \1_{(\textbf{P}^{(l)} \times K)} \Big\} \Big] \times \\
& \quad \times \prod_{(c^{(l-1)},g^{(l)}) \in [C \times \textbf{G}^{(l)}]}\E \Big[ \exp \Big\{ \ii \frac{1}{C^{1/\alpha}}  W^{(l)}_{(c^{(l)},c^{(l-1)},g^{(l)})} \Big( \textbf{t}^{(l)} \otimes \phi(f^{(l-1)(1:K)}_{\patch})_{(c^{(l-1)},g^{(l)})}\Big) \Big\} \Big] \\
&= e^{-\sigma_b^{\alpha}|\textbf{t}^{(l)} \otimes \1_{(\textbf{P}^{(l)} \times K)}|^{\alpha}} \prod_{(c^{(l-1)},g^{(l)}) \in [C \times \textbf{G}^{(l)}]} e^{-\frac{\sigma_{\omega}^{\alpha}}{C}\Big|\textbf{t}^{(l)} \otimes \phi(f^{(l-1)(1:K)}_{\patch})_{(c^{(l-1)},g^{(l)})} \Big|^{\alpha}} \\
&= \exp \Big\{-\sigma_b^{\alpha}|\textbf{t}^{(l)} \otimes \1_{(\textbf{P}^{(l)} \times K)}|^{\alpha} - \frac{\sigma_{\omega}^{\alpha}}{C}\sum_{(c^{(l-1)},g^{(l)}) \in [C \times \textbf{G}^{(l)}]} \Big|\textbf{t}^{(l)} \otimes \phi(f^{(l-1)(1:K)}_{\patch})_{(c^{(l-1)},g^{(l)})} \Big|^{\alpha} \Big\} \\
&= \exp \Big\{-\sigma_b^{\alpha} \| \1_{(\textbf{P}^{(l)} \times K)} \|^{\alpha} \Big|\textbf{t}^{(l)} \otimes \frac{\1_{(\textbf{P}^{(l)} \times K)}}{\|\1_{(\textbf{P}^{(l)} \times K)}\|}\Big|^{\alpha} +\\
& \qquad - \frac{\sigma_{\omega}^{\alpha}}{C} \sum_{(c^{(l-1)},g^{(l)}) \in [C \times \textbf{G}^{(l)}]} \|\phi (f^{(l-1)(1:K)}_{\patch})_{(c^{(l-1)},g^{(l)})}\|^{\alpha} \Big|\textbf{t}^{(l)} \otimes \frac{ (f^{(l-1)(1:K)}_{\patch})_{(c^{(l-1)},g^{(l)})}}{\| \phi(f^{(l-1)(1:K)}_{\patch})_{(c^{(l-1)},g^{(l)})}\|} \Big|^{\alpha} \Big\} \\
&= \exp \Big\{ - \int_{\mathbb{S}^{|\textbf{P}^{(l)}\times K|-1}} | \textbf{t}^{(l)} \otimes s^{(l)}|^{\alpha} \Gamma^{(l)}_C (\dd s^{(l)}) \Big\}
\end{split}
\]
where

\[
\begin{split}
    \Gamma^{(l)}_C=& \| \sigma_b \1_{(\textbf{P}^{(l)} \times K)} \|^{\alpha} \Psi^{(l)} \Big( \1_{(\textbf{P}^{(l)} \times K)} \Big)+ \\
    &+ \frac{1}{C}\sum_{(c^{(l-1)},g^{(l)}) \in [C \times \textbf{G}^{(l)}]} \|\sigma_{\omega} \phi (f^{(l-1)(1:K)}_{\patch})_{(c^{(l-1)},g^{(l)})}\|^{\alpha} \Psi^{(l)} \Big( \phi (f^{(l-1)(1:K)}_{\patch})_{(c^{(l-1)},g^{(l)})} \Big)
\end{split}
\]

with $ (f^{(l-1)(1:K)}_{\patch})_{(c^{(l-1)},g^{(l)})}=\Big[ (f^{(l-1)(k)}_{\patch p^{(l)}})_{(c^{(l-1)},g^{(l)})} \Big]_{\{(p^{(l)},k)\in [\textbf{P}^{(l)} \times K]\}}$.
\end{proof}

\section*{SM B}\zlabel{sec:appB}

We prove that for each $l \in [L]$, $f^{(l)(1:K)} = f^{(l)}(x^{(1:K)},C) \overset{d}{\rightarrow} \bigotimes_{c^{(l)}=1}^{\infty} \text{St}_{\textbf{P}^{(l)} \times K}(\alpha,\Gamma^{(l)}_{\infty}) $ as $C \rightarrow \infty$.

The symbol $\bigotimes$ here denotes the product measure. The proof follows by the Cram\'er-Wold theorem for finite-dimensional projection of $f^{(l)(1:K)}=f^{(l)}(x^{(1:K)},C)$ for which it is sufficient to prove the large $C$ asymptotic behavior of any linear combination of the $f^{(l)(1:K)}_{(c^{(l)},:)}$'s for $c^{(l)} \in \mathcal{L} \subset \N$. See, e.g. \cite{billingsley1999convergence} for details. 

\begin{proof}
Following the notation of \cite{matthews2018gaussianB}, consider a finite linear combination of the function values without the bias, i.e. fix $z=(z_{c^{(l)}})_{\{c^{(l)} \in \mathcal{L}\}}$ and define
\[
\mathcal T^{(l)}(\mathcal L,z,x^{(1:K)},C^{(l-1)})= \sum_{c^{(l)} \in \mathcal L} z_{c^{(l)}}[f^{(l)(1:K)}_{(c^{(l)},:)}-b_{c^{(l)}}^{(l)} \1_{(\textbf{P}^{(l)}\times K)}].
\]

The case $l=1$ is easy since it does not depend on $C$, indeed we get
\[
\begin{split}
\mathcal T^{(1)}(\mathcal L,z,x^{(1:K)},C^{(0)}) &= \sum_{c^{(1)} \in \mathcal L} z_{c^{(1)}} W^{(1)}_{(c^{(1)},:,:)} \overset{(\textbf{P}^{(1)},K)}{\otimes}  x^{(1:K)}_{\patch}
\end{split}
\]
and, following the same steps as in Theorem \ref{theorem1}, for any $\textbf{t}^{(1)}:=[t^{(1)(k)}_{p^{(1)}}]_{\{(p^{(1)},k) \in [\textbf{P}^{(1)} \times K] \}} \in \R^{\textbf{P}^{(1)} \times K}$, called $\| z \|^{\alpha}= \sum_{c^{(1)} \in \mathcal{L}} |z_{c^{(1)}}|^{\alpha}$, we get

\[
\begin{split}
& \varphi_{\big( T^{(1)}(\mathcal L,z,x^{(1:K)},C^{(0)}) \big)} (\textbf{t}^{(1)}) \\
& = \E \Big[ \exp \Big\{ \ii \textbf{t}^{(1)} \otimes T^{(1)}(\mathcal L,z,x^{(1:K)},C^{(0)}) \Big\} \Big] \\
& = \E \Big[ \exp \Big\{ \ii  \textbf{t}^{(1)} \otimes \Big( \sum_{c^{(1)} \in \mathcal L} z_{c^{(1)} } W^{(1)}_{(c^{(1)},:,:)} \overset{(\textbf{P}^{(1)},K)}{\otimes} x^{(1:K)}_{\patch} \Big) \Big\} \Big] \\
& = \E \Big[ \exp \Big\{ \ii \sum_{c^{(1)} \in \mathcal L} z_{c^{(1)}}  W^{(1)}_{(c^{(1)},:,:)} \overset{(\textbf{P}^{(1)},K)}{\otimes} \Big( \textbf{t}^{(1)} \overset{(C^{(0)},\textbf{G}^{(1)})}{\otimes}  x^{(1:K)}_{\patch} \Big) \Big\} \Big] \\
& = \prod_{(c^{(1)},c^{(0)},g^{(1)}) \in \mathcal L\times [C^{(0)} \times \textbf{G}^{(1)}]}\E \Big[ \exp \Big\{ \ii z_{c^{(1)}} W^{(1)}_{(c^{(1)},c^{(0)},g^{(1)})} \textbf{t}^{(1)} \otimes (x^{(1:K)}_{\patch})_{(c^{(0)},g^{(1)})}\Big) \Big\} \Big] \\
&= \prod_{(c^{(1)},c^{(0)},g^{(1)}) \in \mathcal L\times [C^{(0)} \times \textbf{G}^{(1)}]} \exp \Big\{-(|z_{c^{(1)}}|\sigma_{\omega})^{\alpha}\Big|\textbf{t}^{(1)} \otimes (x^{(1:K)}_{\patch})_{(c^{(0)},g^{(1)})} \Big|^{\alpha} \Big\} \\
&= \exp \Big\{ - \sigma_{\omega}^{\alpha} \| z \|^{\alpha} \sum_{(c^{(0)},g^{(1)}) \in [C^{(0)} \times \textbf{G}^{(1)}]} \Big|\textbf{t}^{(1)} \otimes (x^{(1:K)}_{\patch})_{(c^{(0)},g^{(1)})} \Big|^{\alpha} \Big\} \\
&= \exp \Big\{- \sigma_{\omega}^{\alpha} \|z\|^{\alpha} \sum_{(c^{(0)},g^{(1)}) \in [C^{(0)} \times \textbf{G}^{(1)}]} \| (x^{(1:K)}_{\patch})_{(c^{(0)},g^{(1)})}\|^{\alpha} \Big|\textbf{t}^{(1)} \otimes \frac{ (x^{(1:K)}_{\patch})_{(c^{(0)},g^{(1)})}}{\| (x^{(1:K)}_{\patch})_{(c^{(0)},g^{(1)})}\|} \Big|^{\alpha} \Big\} \\
&= \exp \Big\{ - \int_{\mathbb{S}^{|\textbf{P}^{(1)}\times K|-1}} | \textbf{t}^{(1)} \otimes s^{(1)}|^{\alpha} \Delta^{(1)}_C (\dd s^{(1)}) \Big\}
\end{split}
\]

where $\Delta^{(1)}_C$ coincides with $\Gamma^{(1)}$ just replacing $\sigma_{b} \leftarrow 0$ and $\sigma_{\omega} \leftarrow \sigma_{\omega}\|z\|$. Thus since the characteristic function does not depend on $C$ we get (as $C \rightarrow \infty$)

\[
\mathcal T^{(1)}(\mathcal L,z,x^{(1:K)},C^{(0)}) \overset{d}{\rightarrow} \text{St}_{\textbf{P}^{(1)} \times K}(\alpha,\Delta^{(1)}_{\infty})
\]
 where 
 \[
 \Delta^{(1)}=\Delta^{(1)}_{\infty} = \| z \|^{\alpha} \sum_{(c^{(0)},g^{(1)}) \in [C^{(0)} \times \textbf{G}^{(1)}]} \|\sigma_{\omega} (x^{(1:K)}_{\patch})_{(c^{(0)},g^{(1)})}\|^{\alpha} \Psi^{(1)} \Big(  (x^{(1:K)}_{\patch})_{(c^{(0)},g^{(1)})}\Big)
    \]

For $l=2,\dots, L$,
\[
\begin{split}
\mathcal T^{(l)}(\mathcal L,z,x^{(1:K)},C) &= \sum_{c^{(l)} \in \mathcal L} \frac{z_{c^{(l)}} }{C^{1/\alpha}} W^{(l)}_{(c^{(l)},:,:)} \overset{(\textbf{P}^{(l)},K)}{\otimes}  \phi ( f^{(l-1)(1:K)}_{\patch})
\end{split}
\]
and, following the same steps as in Theorem \ref{theorem2}, for any $\textbf{t}^{(l)}:=[t^{(l)(k)}_{p^{(l)}}]_{\{(p^{(l)},k) \in [\textbf{P}^{(l)} \times K] \}} \in \R^{\textbf{P}^{(l)} \times K}$, called $\| z \|^{\alpha}= \sum_{c^{(l)} \in \mathcal{L}} |z_{c^{(l)}}|^{\alpha}$, we get

\[
\begin{split}
& \varphi_{\big( T^{(l)}(\mathcal L,z,x^{(1:K)},C)|f^{(l-1)(1:K)}_{(1:C,:)} \big)} (\textbf{t}^{(l)}) \\
& = \E \Big[ \exp \Big\{ \ii \textbf{t}^{(l)} \otimes T^{(l)}(\mathcal L,z,x^{(1:K)},C)|f^{(l-1)(1:K)}_{(1:C,:)} \Big\} \Big] \\
& = \E \Big[ \exp \Big\{ \ii  \textbf{t}^{(l)} \otimes \Big( \sum_{c^{(l)} \in \mathcal L} \frac{z_{c^{(l)}} }{C^{1/\alpha}} W^{(l)}_{(c^{(l)},:,:)} \overset{(\textbf{P}^{(l)},K)}{\otimes}  \phi ( f^{(l-1)(1:K)}_{\patch}) \Big) \Big\} \Big] \\
& = \E \Big[ \exp \Big\{ \ii \sum_{c^{(l)} \in \mathcal L} \frac{z_{c^{(l)}} }{C^{1/\alpha}} W^{(l)}_{(c^{(l)},:,:)} \overset{(\textbf{P}^{(l)},K)}{\otimes} \Big( \textbf{t}^{(l)} \overset{(C,\textbf{G}^{(l)})}{\otimes}  \phi ( f^{(l-1)(1:K)}_{\patch}) \Big) \Big\} \Big] \\
& = \prod_{(c^{(l)},c^{(l-1)},g^{(l)}) \in \mathcal L\times [C^ \times \textbf{G}^{(l)}]}\E \Big[ \exp \Big\{ \ii \frac{z_{c^{(l)}}}{C^{1/\alpha}}  W^{(l)}_{(c^{(l)},c^{(l-1)},g^{(l)})} \textbf{t}^{(l)} \otimes \phi(f^{(l-1)[1:K]}_{\patch})_{(c^{(l-1)},g^{(l)})}\Big) \Big\} \Big] \\
&= \prod_{(c^{(l)},c^{(l-1)},g^{(l)}) \in \mathcal L\times [C \times \textbf{G}^{(l)}]} \exp \Big\{-\frac{(|z_{c^{(l)}}|\sigma_{\omega})^{\alpha}}{C}\Big|\textbf{t}^{(l)} \otimes \phi(f^{(l-1)(1:K)}_{\patch})_{(c^{(l-1)},g^{(l)})} \Big|^{\alpha} \Big\} \\
&= \exp \Big\{ - \frac{\sigma_{\omega}^{\alpha}}{C} \| z \|^{\alpha} \sum_{(c^{(l-1)},g^{(l)}) \in [C \times \textbf{G}^{(l)}]} \Big|\textbf{t}^{(l)} \otimes \phi(f^{(l-1)(1:K)}_{\patch})_{(c^{(l-1)},g^{(l)})} \Big|^{\alpha} \Big\} \\
&= \exp \Big\{- \frac{\sigma_{\omega}^{\alpha}}{C} \|z\|^{\alpha} \sum_{(c^{(l-1)},g^{(l)}) \in [C \times \textbf{G}^{(l)}]} \|\phi (f^{(l-1)(1:K)}_{\patch})_{(c^{(l-1)},g^{(l)})}\|^{\alpha} \times \\
& \quad \times \Big|\textbf{t}^{(l)} \otimes \frac{ (f^{(l-1)(1:K)}_{\patch})_{(c^{(l-1)},g^{(l)})}}{\| \phi(f^{(l-1)(1:K)}_{\patch})_{(c^{(l-1)},g^{(l)})}\|} \Big|^{\alpha} \Big\} \\
&= \exp \Big\{ - \int_{\mathbb{S}^{|\textbf{P}^{(l)}\times K|-1}} | \textbf{t}^{(l)} \otimes s^{(l)}|^{\alpha} \Delta^{(l)}_C (\dd s^{(l)}) \Big\}
\end{split}
\]

where $\Delta^{(l)}_C$ coincides with $\Gamma^{(l)}_C$ just replacing $\sigma_{b} \leftarrow 0$ and $\sigma_{\omega} \leftarrow \sigma_{\omega}\|z\|$. Now, proceeding as in Theorem \ref{teorem3}, we get the weak limit as $C \rightarrow +\infty$, i.e.

\[
\mathcal T^{(l)}(\mathcal L,z,x^{(1:K)},C) \overset{d}{\rightarrow} \text{St}_{\textbf{P}^{(l)} \times K}(\alpha,\Delta^{(l)}_{\infty})
\]
 where
 \[
\begin{split}
    \Delta^{(l)}_{\infty} = \| z \|^{\alpha} \int \sum_{g^{(l)} \in [\textbf{G}^{(l)}]} \|\sigma_{\omega} \phi (f_{g^{(l)}})\|^{\alpha} \Psi^{(l)} \Big( \phi (f_{g^{(l)}}) \Big) q^{(l-1)}(\dd f_{\{g^{(l)}\in [\textbf{G}^{(l)}]\}})
\end{split}
\]
This completes the proof.
\end{proof}

\section*{SM C}\zlabel{sec:appC}
Focus the attention on the last layer $L$. We found that $f^{(L)(1:K)} \overset{d}{\rightarrow} f^{(L)(1:K)}_{\infty}$, i.e. a convergence of a sequence of $\R^{\infty \times \textbf{P}^{(L)} \times K}$-valued random variables. To gather information on the positions $\textbf{P}^{(L)}$ we consider a linear combination with respect to $\textbf{P}^{(L)}$, i.e. we project the $\infty \times \textbf{P}^{(L)} \times K$ dimensional vector $f^{(L)(1:K)}=f^{(L)}(x^{(1:K)},C)$ into a $\infty \times K$ dimensional vectors, and we take the limit as $C \rightarrow \infty$. More precisely, for $l \in [L]$, fix $\textbf{u} \in \R^{\textbf{P}^{(l)}}$ such that $\textbf{u} \otimes \1_{(\textbf{P}^{(l)})}=1$ and define the transformation $T^{(l)}_{\textbf{u}}: \R^{\infty \times \textbf{P}^{(l)} \times K} \rightarrow \R^{ \infty \times K}$, $(a,b,c) \mapsto (a, \textbf{u} \otimes b, c)$ (in other words $T^{(l)}_{\textbf{u}} \equiv \textbf{u} \underset{\textbf{P}^{(l)}}{\otimes}$). We want to establish the convergence of $T^{(l)}_{\textbf{u}}(f^{(l)(1:K)})$ as $C \rightarrow \infty$.

For $l = 1$ we get

\begin{align*}
T^{(1)}_{\textbf{u}}(f^{(1)(1:K)})&=T^{(1)}_{\textbf{u}}\Big( W^{(1)} \overset{(\textbf{P}^{(1)},K)}{\boxdot} x^{(1:K)}_{\patch}+b^{(1)} \triangle \1 _{( \textbf{P}^{(1)} \times K)} \Big) \\
&= W^{(1)} \overset{(K)}{\boxdot} (\textbf{u} \underset{\textbf{P}^{(1)}}{\otimes} x^{(1:K)}_{\patch})+b^{(1)} \triangle \1 _{(K)}
\end{align*}

and for $l > 1$,
\[
\begin{split}
T^{(l)}_{\textbf{u}}(f^{(l)(1:K)})
&=T^{(l)}_{\textbf{u}} \Big (\frac{1}{C^{1/\alpha}}W^{(l)} \overset{(\textbf{P}^{(l)},K)}{\boxdot}\phi(f^{(l-1)(1:K)}_{\patch})+b^{(l)} \triangle \1 _{( \textbf{P}^{(l)}\times K)} \Big) \\
& = \frac{1}{C^{1/\alpha}}W^{(l)} \overset{(K)}{\boxdot}\big( \textbf{u} \underset{\textbf{P}^{(l)}}{\otimes} \phi(f^{(l-1)(1:K)}_{\patch}) \big)+b^{(l)} \triangle \1 _{( K)}
\end{split}
\]
Following the same steps of Theorem \ref{thm_Cramer} we get that 
\[
T^{(l)}_{\textbf{u}}(f^{(l)(1:K)}) \overset{d}{\rightarrow} \bigotimes_{c^{(l)}=1}^{\infty} \text{St}_{ K}(\alpha,\Gamma^{(l)}_{\infty}(\textbf{u}))
\]
where
\begin{align*}
    \Gamma^{(l)}_{\infty}(\textbf{u}) &=  \| \sigma_b \1_{(K)} \|^{\alpha} \Psi^{(l)} \big( \1_{(K)} \big)+ \int \sum_{g^{(l)} \in [\textbf{G}^{(l)}]} \|\sigma_{\omega} \textbf{u} \underset{\textbf{P}^{(l)}}{\otimes}\phi (f_{g^{(l)}})\|^{\alpha} \times \\
    & \quad \times \mathcal{D}^{(l)} \Big(\textbf{u} \underset{\textbf{P}^{(l)}}{\otimes}\phi (f_{g^{(l)}}) \Big) q^{(l-1)}(\dd f_{\{g^{(l)}\in [\textbf{G}^{(l)}]\}})
\end{align*}
where $\mathcal{D}^{(l)}:\R^{K} \to \R$
\[
\mathcal{D}^{(l)} (z) := 
\begin{cases}
 \frac{1}{2}\delta \Big(\frac{z}{\| z \|} \Big) +\frac{1}{2}\delta \Big(-\frac{z}{\| z \|} \Big) & 0 \neq z \in \R^{K}
 \\
 0 & 0 = z \in \R^{K}
\end{cases}
\]
$f_{g^{(l)}} \in \R^{\textbf{P}^{(l)} \times K}$ and $q^{(l)} = \text{St}_{\textbf{P}^{(l)} \times K}(\alpha,\Gamma^{(l)}_{\infty})$ for $l\in[L]$, being $\Gamma^{(l)}_{\infty}$ defined in theorem \ref{teorem3}.

\end{document}